\newtheorem{lemma}{Lemma}[section]
\newtheorem{remark}{Remark}[section]
\newtheorem{example}{\bf Example}[section]
\definecolor{darkred}{rgb}{.7,0,0}
\definecolor{darkgreen}{rgb}{.15,.55,0}
\definecolor{darkblue}{rgb}{0,0,0.7}
\newcommand*{\R}{{\mathbb{R}}}
\title{Stable generative modeling using Schr\"odinger bridges}
\author{Georg A.~Gottwald \and Fengyi Li \and Youssef Marzouk \and Sebastian Reich}
\date{\today}                                           
\begin{document}

\begin{abstract}
We consider the problem of sampling from an unknown distribution for which only a sufficiently large number of training samples are available. Such settings have recently drawn considerable interest in the context of generative modelling and Bayesian inference. In this paper, we propose a generative model combining Schr\"odinger bridges and Langevin dynamics. Schr\"odinger bridges over an appropriate reversible reference process are used to approximate the conditional transition probability from the available training samples, which is then implemented in a discrete-time reversible Langevin sampler to generate new samples. By setting the kernel bandwidth in the reference process to match the time step size used in the unadjusted Langevin algorithm, our method effectively circumvents any stability issues typically associated with the time-stepping of stiff stochastic differential equations. Moreover, we introduce a novel split-step scheme, ensuring that the generated samples remain within the convex hull of the training samples. Our framework can be naturally extended to generate conditional samples and to Bayesian inference problems. We demonstrate the performance of our proposed scheme through experiments on synthetic datasets with increasing dimensions and on a stochastic subgrid-scale parametrization conditional sampling problem as well as generating sample trajectories of a dynamical system using conditional sampling.
\end{abstract}



\maketitle

\section{Introduction}

Generative modeling is the process of learning a mechanism for synthesizing new samples that resemble those of the original data-generating distribution, given only a finite set of samples. It has seen wide adoption and enormous success across diverse application domains, from image \cite{cv1,cv2} and text generation \cite{nlp1,nlp2}, to drug discovery \cite{drug1, drug2} and anomaly detection \cite{anomaly1,anomaly2}, to name but a few. 

In this paper, we introduce a new nonparametric approach to generative modeling that combines ideas from Schr\"odinger bridges \cite{PeyreCuturi,CGP21} and reversible Langevin dynamics \cite{Pavliotis2016}. Suppose that we are given $M$ training samples $x^{(i)} \sim \pi$, $i=1,\ldots,M$, from an unknown distribution $\pi$ on $\mathbb{R}^d$. Perhaps the simplest nonparametric approach to generative modeling is to build a kernel density estimate (KDE) and then sample from it; the KDE is essentially a mixture model with $M$ components. Alternatively, one could estimate the score function $s(x) = \nabla \log \pi(x)$, without directly estimating $\pi(x)$, and use this estimate in the Langevin dynamics
\begin{equation} \label{eq:score_based}
\dot{X}_\tau = s(X_\tau) + \sqrt{2}\dot{W}_\tau,
\end{equation}
where $W_\tau$ denotes standard $d$-dimensional Brownian motion \cite{Pavliotis2016}. There is a plethora of ways of estimating the score function \cite{hyvarinen2005estimation,song2020sliced}, and given an estimate for it, one needs to discretize \eqref{eq:score_based}, for example using Euler--Maruyama, to obtain an implementable scheme. However, the step size needs to be carefully chosen: a small step size leads to slow convergence, while too large a step yields instability of the numerical scheme, especially for data that are supported on or are strongly concentrated about a compact sub-manifold of $\mathbb{R}^d$, e.g.,
\begin{equation} \label{eq:manifold}
\mathcal{M} = \{x \in \mathbb{R}^d: g(x) =0\},
\end{equation}
for some unknown function $g(x)$; a situation commonly encountered in high-dimensional data and referred to as the manifold hypothesis \cite{FeffermanEtAl16,WhiteleyEtAl24} and which is known to be challenging for generative models based on (\ref{eq:score_based}) \cite{diffusion2}. Any estimated score function, denoted here by $\hat s_M(x)$, will take large values for $x$ with $\|g(x)\|^2 \gg 0$, rendering the Langevin dynamics \eqref{eq:score_based} stiff. This implies that an explicit time integration method such as Euler--Maruyama will require very small step sizes $\Delta \tau >0$.

\begin{example} \label{ex:2D}
In order to illustrate this point consider the singular Gaussian distribution on $\mathbb{R}^2$ with $x=(x_1,x_2) \in \mathbb{R}^2$ satisfying $x_1 \sim {\rm N}(0,1)$ and $x_2 = 0$. Let us assume that the data samples $x^{(i)}$, $i=1,\ldots,M$, have been polluted by noise from a Gaussian in $\mathbb{R}^2$ with mean zero and covariance $\nu I$ for $0< \nu \ll 1$. A standard score estimator $\hat s_M(x)$ will lead to
\begin{equation}
    \hat s_\infty (x) = \left( \begin{array}{c} -\frac{1}{1+\nu} x_1\\ -\frac{1}{\nu}x_2\end{array} \right)
\end{equation}
in the limit $M\to \infty$ and an Euler--Maruyama discretization of the corresponding Langevin dynamics (\ref{eq:score_based}) will require step-sizes $\Delta \tau < 2\nu$, which become arbitrarily small as $\nu \to 0$. We will return to this simple problem in Example \ref{ex:2Db} where it will be used to demonstrate certain advantages of the methodology proposed in this paper and, in particular, that we can circumvent the just described computational bottleneck as $\nu \to 0$.
\end{example}

\noindent
Motivated by this illustrative example, we follow an alternative approach in this paper. Namely, instead of first estimating the score function $s(x)$ and then discretising the corresponding approximation to~\eqref{eq:score_based} in time, we employ Schr\"odinger bridges \cite{PeyreCuturi,CGP21} to directly estimate the conditional expectation value 
\begin{equation}
\mu (x;\epsilon) := \mathbb{E}[X_\epsilon|X_0=x]
\end{equation}
from the given samples $\{x^{(i)}\}_{i = 1}^M$ for given parameter $\epsilon >0$. We denote the data-driven estimator by $m(x;\epsilon):\mathbb{R}^d \to \mathbb{R}^d$. 

More specifically, we first note that $\mu(x;\epsilon) = \exp(\epsilon \mathcal{L})\,{\mbox{id}}(x)$, where ${\mbox{id}}:\mathbb{R}^d \to \mathbb{R}^d$ denotes the identity map and $\mathcal{L}$ the generator of the Langevin dynamics (\ref{eq:score_based}) \cite{Pavliotis2016}. We then approximate the semi-group $\exp(\epsilon \mathcal{L})$ from the given samples $\{x^{(i)}\}_{i=1}^M$ using a Schr\"odinger bridge approximation \cite{MarshallCoifman,WR20}, which optimally couples the empirical measure of the training samples with itself over an appropriate random walk reference process. By solving the Schr\"odinger bridge problem, we construct a transition matrix whose state space encompasses all the training points, which in turn defines $m(x;\epsilon)$ for all $x = x^{(i)}$, $i=1,\ldots,M$. We then extend this approximation beyond the training data to all $x \in \mathbb{R}^d$,  which leads to the desired approximation $m(x;\epsilon)$.

The second key ingredient of our method is to interpret $\epsilon$ as a step size and to read off a Gaussian transition kernel from the Schr\"odinger bridge approximation in the form of
\begin{equation} \label{eq:propagator_general}
X_{n+1} = m(X_n;\epsilon) + \sqrt{\Sigma (X_n)} \,\Xi_n,
\end{equation}
with appropriately defined diffusion matrix $\Sigma:\mathbb{R}^d \to \mathbb{R}^{d\times d}$ and $\Xi_n \sim {\rm N}(0,\epsilon I)$. Broadly, $m(X_n;\epsilon)$ controls the drift, while $\sqrt{\Sigma(X_n)}\,\Xi_n$ introduces noise. An obvious choice for $\Sigma(x)$ 
is $\Sigma(x) = 2I$, which corresponds to (\ref{eq:score_based}) and its Euler--Maruyama discretisation with step-size $\Delta \tau = \epsilon$. In addition, we explore data-informed choices of $\Sigma(x)$ in (\ref{eq:propagator_general}).

Comparing to directly discretizing~\eqref{eq:score_based} using Euler--Maruyama with step-size $\Delta \tau = \epsilon$, we will demonstrate that the scheme~\eqref{eq:propagator_general} is stable and ergodic for all step-sizes $\epsilon >0$ and, hence, $\epsilon$ can be chosen solely on accuracy considerations. We also introduce a novel split-step time-stepping scheme, which ensures that the generated samples lie in the convex hull of the training samples.

In addition, we replace the constant diffusion matrix $\Sigma (x) = 2 I$ with a scaled matrix
\begin{equation} \label{eq:scaled covariance}
\Sigma (x) = 2 \rho(x) I
\end{equation}
for given bandwidth $\rho(x)>0$, which requires appropriate modifications to the Schr\"odinger bridges considered in \cite{WR20}. As we demonstrate in our numerical experiments, the resulting sampling scheme~\eqref{eq:propagator_general} provides a better representation of the underlying target distribution.  More precisely, we assess the quality of the generated samples using a variable bandwidth kernel and using a fixed bandwidth kernel on synthetic data sets drawn from non-uniform distributions supported on irregular domains and on low-dimensional manifolds. 

We then extend our method to cover Bayesian inference problems with $\pi(x)$ as prior and to create a conditional generative model. These extensions allow us to perform Bayesian inference in the ``simulation-based'' setting, i.e., without explicit evaluation of a prior density and, in the case of conditional sampling, even without evaluations of the data likelihood.  We demonstrate the performance of our conditional generative model for a stochastic subgrid-scale parametrization problem and for the generation of synthetic time series of dynamical systems. 

%
\subsection{Related work}
%
Langevin dynamics~\eqref{eq:score_based} characterizes the motion of particles as they experience a blend of deterministic and stochastic forces. Unlike in this paper, it is typically assumed that the deterministic forcing term $\nabla \log \pi(x)$ is given. Langevin dynamics has become a popular tool for sampling data from the target distribution $\pi(x)$. One variation of this is to introduce a symmetric preconditioning operator to the Langevin dynamics and to consider reversible processes of the general form
\begin{equation} \label{eq:reversible diffusion}
\dot{X}_\tau = K(X_\tau) \nabla \log \pi(X_\tau) + \nabla \cdot K(X_\tau) + \sqrt{2K(X_\tau)}\dot{W}_\tau,
\end{equation}
which samples from the distribution $\pi(x)$ for any symmetric positive definite matrix $K(x)$. We adopt here the It\^o interpretation of the multiplicative noise term \cite{Pavliotis2016}. While (\ref{eq:reversible diffusion}) has originally been discussed in molecular physics \cite{Fixman78,ErmakMcCammon,HutterOttinger98}, popular choices of $K(x)$ arising from computational statistics include the empirical covariance \cite{Garbuno-Inigo} and the Riemannian metric \cite{RMLD1, Li2015PreconditionedSG}, making this method converge faster and more geometry-aware, while leaving the stationary distribution unchanged. The scaled diffusion matrix (\ref{eq:scaled covariance}) corresponds to the choice $K(x) = \rho(x) I$. Optimal choice of $K(x)$ in terms of convergence to equilibrium have recently been discussed in \cite{LelievreEtAl24}.

Our approximation of the semi-group $\exp(\epsilon \mathcal{L})$ relies on recent work on diffusion maps and an accelerated Sinkhorn algorithm \cite{MarshallCoifman,WR20}. The Sinkhorn algorithm solves for the Markov transition kernel associated with a discrete Schr\"odinger bridge problem, where the coupling is between the empirical measure of the training samples with itself. This approach results in a symmetric bi-stochastic matrix that, notably, approximates the semi-group $\exp(\epsilon \mathcal{L})$ to higher accuracy in $\epsilon$ than standard diffusion map approximations \cite{diffusion_map1,diffusion_map2}. Separately, the idea of using variable bandwidth kernels can be found very early in the statistics community, for example, in the context of kernel density estimation \cite{vb_kde_Salgado-Ugarte, terrell1992variable}. Recently, \cite{DM_variable_bw} replaces the original fixed bandwidth kernel with the variable bandwidth kernel in the construction of diffusion maps, making the approximation of the generator accurate on unbounded domains. Inspired by this concept, we replace the fixed bandwidth kernel $\Sigma = 2I$ with a variable bandwidth kernel (\ref{eq:scaled covariance}) for given $\rho(x)>0$. The resulting Schr\"odinger bridge approximates the semi-group of the reversible Langevin diffusion process (\ref{eq:reversible diffusion}) with $K(x) = \rho(x) I$. 

Several recent studies have combined a range of score function estimation techniques with Langevin dynamics. For example,  \cite{diffusion2} introduces a noise conditional score network to learn the score function and then uses annealed Langevin dynamics to generate samples. \cite{Block2020GenerativeMW} studies the convergence rate of a Langevin based generative model, where the score is estimated using denoising auto-encoders. Such techniques are also studied within the Bayesian imaging community, commonly referred to as ``plug and play'' \cite{plugandplay}. These approaches typically use neural networks for the estimation of the score function. The Schr\"odinger bridge sampler proposed in this paper is instead built upon a direct discrete-time approximation to (\ref{eq:score_based}). In addition, \cite{plugandplay} uses an explicit projection to ensure that samples stay on the compact manifold given by~\eqref{eq:manifold} which is assumed to be explicitly known. In contrast, we do not assume any knowledge of $\mathcal{M}$.

We finally mention diffusion models or score generative models (SGM), which have been successfully used for generative modelling, in particular in image generation \cite{diffusion1, diffusion2}. These methods solve both a forward and a reverse stochastic differential equation (SDE). The forward SDE introduces noise to the sample, evolving the prior into the standard normal distribution, while the reverse SDE evolves samples from the standard normal distribution back to the original data distribution, yielding a different sample than the one initially fed into the forward SDE. During the training process, the score function is learned for the nonstationary distribution at each time. We mention that Schr\"odinger bridges have also been implemented in the context of SGMs in order to exactly couple the target distribution with the standard normal distribution \cite{VargasEtAl21,DeBortoliEtAl21,ShiEtAl23,ChenEtAl22,Peluchetti24}. Here we consider a completely different application of Schr\"odinger bridges; namely the estimation of conditional expectation values from training data. 

%
\subsection{Outline}
%
In Section~\ref{sec:discrete SB} we construct a Markov chain using a Schr\"odinger bridge approximation that samples from the given discrete data distribution. In Section~\ref{sec:diffusion approximation}, we extend this Markov chain to the continuous state space setting by constructing a Gaussian transition kernel which
extracts its conditional mean and covariance matrix from the underlying diffusion map approximation. We introduce two discrete-time Langevin samplers; one with a data-unaware diffusion and one with a data-aware diffusion matrix in Section~\ref{sec:3.1}. Theoretical properties such as stability and ergodicity are discussed in Section~\ref{sec:3.2}. We further discuss the application of variable bandwidth kernels when constructing the Schr\"odinger bridge in Section~\ref{sec:vb}. Here the goal is to approximate the tails of the reference distribution $\pi(x)$ better from the available training samples. In terms of practical applications, we explore the extension of our proposed scheme to a conditional sampling setting and Bayesian inference in Section~\ref{sec:conditional}. We demonstrate our proposed methods in  Section~\ref{sec:numerics} in a suite of examples. We start with a couple of synthetic examples demonstrating among others the benefit of variable bandwidth implementations. In terms of applications, we employ conditional sampling to provide a data-driven stochastic subgrid-scale parametrization for multi-scale systems. Furthermore, we show how our approach can be used to generate realistic synthetic trajectories of a dynamical system, given only a single time series for training. 
We conclude in  Section~\ref{sec:conclusion} with a summary and an outlook. 

%
\section{Discrete Schr\"odinger bridges}\label{sec:discrete SB}
%
In this section, we collect some preliminary building blocks by considering the simpler task of building a discrete Markov chain over the samples $\{x^{(i)}\}_{i=1}^M$, which leaves the associated empirical probability measure
\begin{equation}
\mu_{\rm em} ({\rm d}x) = \frac{1}{M} \sum_{i=1}^M  \delta_{x^{(i)}}({\rm d}x)
\end{equation}
in $\mathbb{R}^d$ invariant. Here $\delta_x ({\rm d}x)$ denotes the Dirac delta distribution centred at $x$. In the subsequent section, we will generalise the finding from this section to approximately sample from $\pi(x)$, allowing for the generation of new samples which are distinct from the given training samples.

We consider the Schr\"odinger bridge problem of coupling $\mu_{\rm em}({\rm d}x)$ with itself along a reversible reference process 
with (unnormalized) transition probabilities
\begin{equation} \label{eq:tij}
t_{ij} = \exp \left( -\frac{1}{2 \epsilon} (x^{(i)}-x^{(j)})^{\top} 
\left(K(x^{(i)}) + K(x^{(j)})\right)^{-1}
(x^{(i)}-x^{(j)}) \right),
\end{equation}
which we collect into a symmetric matrix $T \in \mathbb{R}^{M\times M}$. Here $\epsilon >0$ is a tuneable parameter and $K(x)$ is a symmetric positive definite matrix for all $x \in \mathbb{R}^d$. Popular choices include $K= I$, $K = \Sigma_{M}$, where $\Sigma_M$ is the empirical covariance matrix of the training samples $\{x^{(i)}\}_{i=1}^M$, and $K = \rho(x)I$, where $\rho (x)>0$ is a scaling function representing variable bandwidth.  

Instead of working with the empirical measure $\mu_{\rm em}({\rm d}x)$, we introduce the probability  vector 
$p^\ast = (1/M,\ldots,1/M)^{\top} \in \mathbb{R}^M$ over $\{x^{(i)}\}_{i=1}^M$. Then the associated Schr\"odinger bridge problem can be reformulated into finding the non-negative scaling vector $v \in \mathbb{R}^M$ such that the symmetric matrix
\begin{equation} \label{eq:DM_approximation}
P = D(v) T D(v)
\end{equation}
is a Markov chain with invariant distribution $p^\ast$, i.e.,
\begin{equation}
P p^\ast = p^\ast.
\end{equation}
Here $D(v) \in \mathbb{R}^{M\times M}$ denotes the diagonal matrix with diagonal entries provided by $v \in \mathbb{R}^M$. We remark that the standard scaling used in Schr\"odinger bridges would lead to a bi-stochastic matrix $\tilde P$, which is related to (\ref{eq:DM_approximation}) by $\tilde P = M^{-1} P$.

Given $P$, one can now construct a Monte Carlo scheme that samples from $\mu_{\rm em}$. Assume the Markov chain is currently in state $x^{(j)}$, then the transition probabilities to the next state $x \in \{x^{(i)}\}_{i=1}^M$ are given by 
\begin{equation}
p_j = P e_j \in \mathbb{R}^M,
\end{equation}
where $e_j \in \mathbb{R}^M$ denotes the $j$-th unit vector in $\mathbb{R}^M$. Since all entries in $P$ are bounded from below provided all samples satisfy $x^{(i)} \in \mathcal{C}$, where $\mathcal{C}$ is a compact subset in $\mathbb{R}^d$, the constructed Monte Carlo scheme possesses a unique invariant measure given by $p^\ast$ and is geometrically ergodic. The rate of convergence can be determined by the diffusion distance
\begin{equation}
    d(x^{(i)},x^{(j)}) = \|p_i - p_j\|^2.
\end{equation}
If the diffusion distance is small, then $x^{(i)}$ and $x^{(j)}$ are well connected. Furthermore, if $d(x^{(i)},x^{(j)})$ is small for all points, then the Markov chain will mix quickly. In particular, larger values of $\epsilon$ will lead to faster mixing. 

However, the goal is to approximately sample from the underlying distribution $\pi(x)$ and not just the empirical distribution $\mu_{\rm em}({\rm d}x)$. The required extension of our baseline algorithm is discussed in the following section.

\section{Approximating the conditional mean} \label{sec:diffusion approximation}
%
In order to implement~\eqref{eq:propagator_general}, we need to define $m(x;\epsilon)$ and $\Sigma(x)$ for any $x\in\mathbb R^d$. In this section, we discuss how one can obtain these functions from the training samples $\{x^{(i)}\}_{i=1}^M$ and the Markov chain approximation (\ref{eq:DM_approximation}).

We introduce the vector-valued function $t(x) \in \mathbb{R}^M$ with entries
\begin{equation}
\label{eq:tvec}
t_i(x) =  \exp \left( -\frac{1}{2\epsilon} (x^{(i)}-x)^{\top} \left(K(x)+K(x^{(i)})\right)^{-1}(x^{(i)}-x) \right)
\end{equation}
for $i = 1,\ldots,M$. We then define the probability vector using the Sinkhorn weights, $v$, obtained in~\eqref{eq:DM_approximation}, i.e., 
\begin{equation} \label{eq:probability vectors}
p (x) = \frac{D(v) t(x)}{v^{\top} t (x)} \in \mathbb{R}^M
\end{equation}
for all $x\in \mathbb{R}^d$. This vector gives the transition probabilities from $x$ to training samples $\{x^{(i)}\}_{i=1}^M$ and provides a finite-dimensional approximation to the conditional probability distribution $\pi_\epsilon (\cdot|x)$ of the true underlying diffusion process; i.e., the semi-group $\exp(\epsilon \mathcal{L})$ with generator $\mathcal{L}$ corresponding to the generalized reversible diffusion process (\ref{eq:reversible diffusion}). 

Finally, using the probability vector $p(x)$, which implicitly depends on $\epsilon$, and introducing the data matrix of training samples
\begin{equation}\label{eq:data_vec}
\mathcal{X} = (x^{(1)},\ldots,x^{(M)}) \in \mathbb{R}^{d\times M},
\end{equation}
our sample-based approximation of the conditional mean is provided by
\begin{equation} \label{eq:mean}
m(x;\epsilon) := \mathcal{X} p (x).
\end{equation}

\begin{remark}
The construction of the conditional mean $m(x;\epsilon)$ is known as the barycentric projection of the entropy-optimally coupling \cite{SeguyEtAl2018,PooladianNilesWeed2022}. In optimal transport, $v$ plays the role of the optimizer of the dual problem. We also mention the connection to de-noising schemes which utilize a structure similar to (\ref{eq:mean}). See the recent review \cite{MD24}. 
\end{remark}

%
\subsection{Sampling algorithms} \label{sec:3.1}
%
We now present our main Langevin sampling strategies based on the previously introduced probability vector $p (x)$ in~\eqref{eq:probability vectors}. Sampling schemes of the form (\ref{eq:propagator_general}) have the same drift term (\ref{eq:mean}), but differ in the way the diffusion matrix $\Sigma(x)$ is defined. We consider a data-unaware diffusion as well as a data-aware diffusion which turns out to be advantageous in generating new samples from the data distribution $\pi$ (see the numerical experiments in Section~\ref{sec:numerics}).

%
\subsubsection{Langevin sampler with data-unaware diffusion}
%
Using $m_\epsilon(x)$, we propose the recursive sampler 
\begin{equation} \label{eq:update_tau}
X_{n+1} = X_n + \Delta \tau \left( \frac{m(X_n;\epsilon)-X_n}{\epsilon}\right) + \sqrt{2 K(X_n)} \Xi_n
\end{equation}
as an approximation to (\ref{eq:reversible diffusion}), where $\Delta \tau$ is the time step and $\Xi_n \sim {\rm N}(0,\Delta \tau I)$. If $K = I$,  we obtain the score function approximation
\begin{equation} \label{eq:score_approximation}
s (x) = \frac{m(x;\epsilon)-x}{\epsilon}
\end{equation}
in~\eqref{eq:score_based}. Furthermore, by taking $\Delta \tau = \epsilon$, we have
\begin{equation} \label{eq:update}
X_{n+1} = m(X_n;\epsilon) + \sqrt{2 K(X_n)} \Xi_n. 
\end{equation}
Note that~\eqref{eq:update} fits into the general formulation~\eqref{eq:propagator_general}
with $\Sigma(x) = 2K(x)$. 

Let us briefly discuss the qualitative behavior of the time-stepping method (\ref{eq:update}) as a function of $\epsilon>0$. For large $\epsilon$, the expected value $m(x;\epsilon)$ will become essentially independent of the current state $X_n$ and the diffusion process will sample from a centred Gaussian. For $\epsilon \to 0$, on the other hand, the probability vector $p(x)$ can potentially degenerate into a vector with a single entry approaching one with all other entries essentially becoming zero. Hence a key algorithmic challenge is to find a good value for $\epsilon$ and a suitable $K(x)$, which guarantee both good mixing and accuracy, i.e., $X_n \sim \pi$ as $n\to \infty$. 

In terms of initialisation, it is often best to initialise from one of the training data $x^{(j^\ast)}$ with
$j^\ast \in \{1,\ldots,M\}$ chosen at random.

%
\subsubsection{Langevin sampler with data-aware diffusion}
%
From~\eqref{eq:probability vectors} and~\eqref{eq:data_vec}, one can also define the scaled conditional covariance matrix, 
\begin{equation} \label{eq:cm_estimate}
C(x) = \epsilon^{-1} 
(\mathcal{X}-m(x;\epsilon) 1_M^{\top}) D(p(x))  (\mathcal{X}-m(x;\epsilon) 1_M^{\top})^{\top} \in
\mathbb{R}^{d\times d},
\end{equation}
which is the (scaled) covariance matrix associated with the probability vector $p(x)$. Here $1_M \in \mathbb{R}^M$ denotes the $M$-dimensional vector of ones. Therefore, one can more directly implement a Gaussian approximation associated with the transition probabilities $p(x)$ and introduce the update
\begin{equation} \label{eq:update2_tau}
X_{n+1} = X_n + \Delta \tau \left( \frac{m(X_n;\epsilon)-X_n}{\epsilon}\right) +  \sqrt{C(X_n)} \Xi_n.
\end{equation}
Similar to the previous case, setting $\Delta \tau = \epsilon$ implies
\begin{equation} \label{eq:update2}
X_{n+1} = m(X_n;\epsilon) + \sqrt{C(X_n)} \Xi_n,
\end{equation}
which we found to work rather well in our numerical experiments since it directly captures the uncertainty contained in the data-driven coupling $P$. The scheme~\eqref{eq:update2} corresponds to setting $\Sigma(x) =  C(x)$ in~\eqref{eq:propagator_general}. Also note that the scheme~\eqref{eq:update2} still depends on $K(x)$ through the probability vector $p(x)$. 

%
\subsection{Algorithmic properties} \label{sec:3.2}
%
We briefly discuss several important properties on the stability and the ergodicity of the proposed Langevin samplers. The following Lemma establishes that, since each $p$ is a probability vector, $m(x;\epsilon)=\mathcal{X} p(x)$ is a convex combination of the training sample $\{x^{(i)}\}_{i = 1}^M$.

\begin{lemma} \label{lemma1} 
Let us denote the convex hull generated by the data points 
$\{ x^{(i)}\}_{i=1}^M$ by $\mathcal{C}_M$. It holds that 
\begin{equation} \label{eq:stability}
    m(x;\epsilon) \in \mathcal{C}_M
\end{equation}
for all choices of $\epsilon > 0$ and all $x \in \mathbb{R}^d$.
\end{lemma}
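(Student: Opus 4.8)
The plan is to show that $m(x;\epsilon) = \mathcal{X}\,p(x)$ is a convex combination of the columns of $\mathcal{X}$, i.e., of the training points $\{x^{(i)}\}_{i=1}^M$, and that the set of such convex combinations is exactly $\mathcal{C}_M$. Since the statement is essentially definitional once one unpacks the formula for $p(x)$, the proof will be short; the only real content is verifying that the coordinates of $p(x)$ are nonnegative and sum to one.

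First I would recall from~\eqref{eq:probability vectors} that $p(x) = D(v)\,t(x)/\bigl(v^{\top} t(x)\bigr)$. The entries $v_i$ of the Sinkhorn scaling vector are nonnegative (indeed strictly positive), and each entry $t_i(x)$ of $t(x)$ in~\eqref{eq:tvec} is of the form $\exp(-(\cdots))$ with a nonnegative argument inside the exponential, hence $t_i(x) \in (0,1]$; in particular $t_i(x) > 0$. Therefore the numerator $D(v)t(x)$ has nonnegative entries and the normalizing scalar $v^{\top} t(x) = \sum_i v_i t_i(x)$ is strictly positive, so every component $p_i(x) = v_i t_i(x)/\sum_j v_j t_j(x)$ is well-defined and nonnegative. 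Summing over $i$ gives $\sum_i p_i(x) = \bigl(\sum_i v_i t_i(x)\bigr)/\bigl(\sum_j v_j t_j(x)\bigr) = 1$, so $p(x)$ is a genuine probability vector for every $x \in \mathbb{R}^d$ and every $\epsilon > 0$.

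Next I would write $m(x;\epsilon) = \mathcal{X} p(x) = \sum_{i=1}^M p_i(x)\, x^{(i)}$, which is by the previous paragraph a convex combination of $x^{(1)},\ldots,x^{(M)}$ with coefficients $p_i(x) \ge 0$, $\sum_i p_i(x) = 1$. By the definition of $\mathcal{C}_M$ as the convex hull of $\{x^{(i)}\}_{i=1}^M$, every such convex combination lies in $\mathcal{C}_M$, which establishes~\eqref{eq:stability}. Since nothing in the argument restricts $x$ or $\epsilon$, the conclusion holds for all $x \in \mathbb{R}^d$ and all $\epsilon > 0$.

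The only place that requires a moment of care — and thus the main (mild) obstacle — is ensuring the denominator $v^{\top} t(x)$ never vanishes, so that $p(x)$ is always well-defined; this is handled by the strict positivity of $t_i(x) = \exp(-\tfrac{1}{2\epsilon}(\cdots)) > 0$ together with $v \ge 0$, $v \neq 0$ (the Sinkhorn scaling produces a strictly positive $v$ when $T$ has positive entries, which holds here since $K(x^{(i)})+K(x^{(j)})$ is positive definite and the quadratic form is finite). No compactness or further hypothesis on the data is needed for this lemma.
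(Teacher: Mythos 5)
Your proof is correct and follows essentially the same route as the paper's: both observe that $p(x)$ is a probability vector, so $m(x;\epsilon)=\sum_{j=1}^M p_j(x)\,x^{(j)}$ is a convex combination of the training points and hence lies in $\mathcal{C}_M$. Your additional verification that the denominator $v^{\top}t(x)$ is strictly positive is a worthwhile detail the paper leaves implicit, but it does not change the argument.
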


\begin{proof}
    The Lemma follows from the definition~\eqref{eq:mean}, which we write as $m(x;\epsilon)=\sum_{j=1}^M x^{(j)} p_{j}(x)$,  and the fact that $p(x)$ is a probability vector with $0\le p_{j}(x)\le 1$ for all $\epsilon>0$ and all $x \in \mathbb{R}^d$.
\end{proof}

\noindent
This establishes stability of the Langevin samplers~\eqref{eq:update} and~\eqref{eq:update2} for all step-sizes $\epsilon>0$. The next lemma shows that the Langevin sampler~\eqref{eq:update} is geometrically ergodic.

\begin{lemma} \label{lemma2} 
Let us assume that the data generating density $\pi(x)$ has compact support. Then the time-stepping method~\eqref{eq:update} possesses a unique invariant measure and is geometrically ergodic provided the norm of the symmetric positive matrix $K(x)$ is bounded from above and below for all $x \in \mathbb{R}^d$. 
\end{lemma}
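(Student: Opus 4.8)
The plan is to read off the one-step transition kernel of \eqref{eq:update} explicitly and then verify a global minorization (\emph{Doeblin}) condition; by classical Markov chain theory this yields at once uniqueness of the invariant measure and uniform --- hence geometric --- ergodicity. Since $\Xi_n\sim\mathrm{N}(0,\epsilon I)$, the kernel of \eqref{eq:update} is the Gaussian $P(x,\cdot)=\mathrm{N}\!\big(m(x;\epsilon),\,2\epsilon K(x)\big)$, which has a strictly positive smooth Lebesgue density $g_x(y)$. Two structural facts will be used. First, by Lemma~\ref{lemma1}, $m(x;\epsilon)\in\mathcal C_M$ for every $x\in\mathbb R^d$, and $\mathcal C_M$ is compact (being the convex hull of the finitely many training points, which lie in the compact support of $\pi$). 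Second, by hypothesis the eigenvalues of $K(x)$ lie in a fixed interval $[\underline{k},\overline{k}]\subset(0,\infty)$ for all $x$.

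First I would fix a closed ball $B=\overline{B}(0,R)$ with $\mathcal C_M\subset\operatorname{int} B$ and bound $g_x$ from below on $B$, uniformly in $x$. Using $\det\!\big(2\epsilon K(x)\big)\le(2\epsilon\overline{k})^{d}$ and $(y-m(x;\epsilon))^{\top}(2\epsilon K(x))^{-1}(y-m(x;\epsilon))\le\|y-m(x;\epsilon)\|^{2}/(2\epsilon\underline{k})\le(2R)^{2}/(2\epsilon\underline{k})$ for $y\in B$, one gets $g_x(y)\ge\delta:=(2\pi)^{-d/2}(2\epsilon\overline{k})^{-d/2}\exp\!\big(-R^{2}/(\epsilon\underline{k})\big)>0$ for all $x\in\mathbb R^d$ and all $y\in B$. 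Hence, with $\nu(\cdot):=\mathrm{Leb}(\cdot\cap B)/\mathrm{Leb}(B)$ and $\alpha:=\delta\,\mathrm{Leb}(B)\in(0,1]$, we obtain the uniform Doeblin condition $P(x,A)\ge\alpha\,\nu(A)$ for all $x$ and all measurable $A$. The standard consequence (see e.g.\ \cite{Pavliotis2016}) is that the chain admits a unique invariant probability measure $\mu_\infty$ and $\sup_{x}\|P^{n}(x,\cdot)-\mu_\infty\|_{\mathrm{TV}}\le(1-\alpha)^{n}$, i.e.\ the sampler is uniformly, and in particular geometrically, ergodic. Equivalently, one may note that $\|m(x;\epsilon)\|$ is bounded uniformly in $x$ and $\mathbb E[\,1+\|X_{n+1}\|^{2}\mid X_n=x\,]\le b$ for a constant $b$ independent of $x$ --- a trivial geometric drift for $V(x)=1+\|x\|^{2}$ --- while the density lower bound shows every bounded set is small; the drift--small-set theorem then gives the same conclusion.

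The only genuinely delicate point is that the minorizing ball $B$ and the constant $\alpha$ can be chosen independently of the starting point $x$, including for $x$ arbitrarily far from the data. This is exactly what Lemma~\ref{lemma1} provides: the drift $m(\,\cdot\,;\epsilon)$ relaxes every state into the fixed compact $\mathcal C_M$ in a single step, so the Gaussian noise immediately spreads mass over a fixed region $B$ regardless of where the chain started. Everything else --- measurability of $x\mapsto P(x,A)$, positivity and smoothness of $g_x$, and the elementary bounds above --- is routine, and the boundedness of $K$ from both below and above is used precisely to keep $g_x$ bounded below on $B$ uniformly in $x$.
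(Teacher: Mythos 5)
Your argument is correct, and it reaches the conclusion by a genuinely different (and in fact stronger) route than the paper. The paper's proof works with $K=I$, introduces the Lyapunov function $V(x)=\|x\|^2$, establishes a geometric drift condition $\mathbb{E}[V(X_{n+1})\,|\,X_n]\le\lambda V(X_n)$ outside a ball $\mathcal{B}_R$, shows that $\mathcal{B}_R$ is a small set via a local lower bound on the Gaussian transition density, and then invokes Theorem~15.0.1 of \cite{MeynTweedy}. You instead observe that, because Lemma~\ref{lemma1} forces $m(x;\epsilon)\in\mathcal{C}_M$ for \emph{every} $x\in\mathbb{R}^d$ and the eigenvalues of $K$ are pinned in $[\underline{k},\overline{k}]$, the transition density is bounded below by a single constant $\delta>0$ on a fixed ball $B\supset\mathcal{C}_M$ uniformly over all starting points, so the \emph{whole state space} is a small set and a one-step Doeblin minorization $P(x,A)\ge\alpha\,\nu(A)$ holds globally. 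This dispenses with the drift condition entirely (or renders it trivial, as you note) and yields uniform ergodicity with the explicit rate $(1-\alpha)^n$, which is stronger than the geometric ergodicity claimed; it also handles general bounded $K(x)$ directly rather than reducing to $K=I$. What the paper's drift-plus-small-set route buys is robustness: it is the template one would need if the drift did not collapse all of $\mathbb{R}^d$ into a fixed compact set in one step (e.g.\ for the scheme \eqref{eq:update_tau} with $\Delta\tau<\epsilon$, where $X_{n+1}$ retains a $(1-\Delta\tau/\epsilon)X_n$ component), whereas your Doeblin argument exploits the special structure of \eqref{eq:update} to the fullest. The only points you wave at as ``routine''---measurability/continuity of $x\mapsto K(x)$ and hence of the kernel---are glossed over in the paper as well, so this is not a gap relative to the paper's own standard of rigor.
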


\begin{proof} We consider $K(x) = I$ for simplicity. We introduce the Lyapunov function $V(x) = \|x\|^2$ and balls
\begin{equation}
\mathcal{B}_R = \{ x\in \mathbb{R}^d : \|x\| \le R\}
\end{equation}
of radius $R>0$ in $\mathbb{R}^d$. Since $m_{\epsilon}(X_n) \in \mathcal{C}_M$ and $\pi(x)$ has compact support, one can find radii $R_\ast>0$ and $R> \sqrt{R_\ast^2 + 2 \epsilon}$, which are independent of the training data $\{x^{(i)}\}_{i=1}^M$, such that $\mathcal{C}_M \subset \mathcal{B}_{R_\ast}$ and
\begin{equation}
    \mathbb{E}[V(X_{n+1})|X_n] \le \lambda V(X_n)
\end{equation}
for all $X_n \notin \mathcal{B}_R$ and suitable $0\le \lambda < 1$. This follows from the fact that $\mathbb{E}[V(X_{n+1})|X_n] < R_\ast^2 + 2\epsilon$, while $V(x)>R^2$ for all $x\notin \mathcal{B}_R$. One then chooses $\lambda = (R_\ast^2 + 2 \epsilon)/R^2< 1$. Furthermore, there is a constant $\delta>0$ such that
\begin{equation*}
{\rm n}(x';m(x;\epsilon),2\epsilon I) \ge \delta
\end{equation*}
for all $x,x'\in \mathcal{B}_R$. Here ${\rm n}(x;m,\Sigma)$ denotes the Gaussian probability density function with mean $m$ and covariance matrix $\Sigma$. In other words, $\mathcal{B}_R$ is a small set in the sense of \cite{MeynTweedy}. Geometric ergodicity follows from Theorem 15.0.1 in \cite{MeynTweedy}. See also the self-contained presentation in \cite{MSH02}.
\end{proof}

\noindent
We note that extending Lemma \ref{lemma2} to the time-stepping scheme~\eqref{eq:update2} with a data-aware diffusion is non-trivial since the covariance matrix~\eqref{eq:cm_estimate} may become singular.

\begin{lemma} If $K(x)= I$, the conditional mean estimator $m(x;\epsilon)$ is equivalent to
\begin{equation} \label{eq:gradient log density estimator}
m (x;\epsilon) = x + \epsilon \nabla_x \log \Pi(x;\epsilon), \qquad 
\Pi (x;\epsilon) := (v^{\rm T}t(x))^2.
\end{equation}
Hence, the update (\ref{eq:update}) is equivalent to an Euler--Maruyama discretization of Langevin dynamics (\ref{eq:score_based}) with modified probability density 
\begin{equation}
    \tilde \pi(x;\epsilon) := \frac{\Pi(x;\epsilon)}{\int \Pi(x;\epsilon)\,{\rm d}x}.
\end{equation}
In other words, for $\epsilon$ sufficiently small and $K(x) = I$, (\ref{eq:update}) samples approximately from $\tilde \pi(x;\epsilon)$.
\end{lemma}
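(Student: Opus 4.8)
The plan is to prove the displayed identity by a one-line differentiation, after which both stated consequences follow essentially for free; the only part that needs genuine care is making the final sentence precise. First I would specialize the kernel: with $K(x)=I$ we have $K(x)+K(x^{(i)})=2I$, so \eqref{eq:tvec} collapses to the isotropic Gaussian $t_i(x)=\exp\!\big(-\tfrac1{4\epsilon}\|x-x^{(i)}\|^2\big)$, and since the Sinkhorn vector $v=v(\epsilon)$ is independent of $x$, combining \eqref{eq:probability vectors} and \eqref{eq:mean} gives $m(x;\epsilon)=\mathcal X D(v)t(x)/(v^\top t(x))$. Next I would differentiate $\log\Pi(x;\epsilon)=2\log(v^\top t(x))$: using $\nabla_x t_i(x)=-\tfrac1{2\epsilon}t_i(x)\,(x-x^{(i)})$ one gets
\begin{equation*}
\nabla_x\log\Pi(x;\epsilon)=\frac{2\,\nabla_x(v^\top t(x))}{v^\top t(x)}
=-\frac1\epsilon\left(x-\frac{\sum_i v_i t_i(x)\,x^{(i)}}{\sum_i v_i t_i(x)}\right)=\frac{m(x;\epsilon)-x}{\epsilon},
\end{equation*}
and multiplying by $\epsilon$ yields \eqref{eq:gradient log density estimator}. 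Note that $v^\top t(x)>0$ for every $x$, so $\log\Pi(\cdot;\epsilon)$ is everywhere smooth; I do not expect any obstacle in this step.

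For the Euler--Maruyama reading, I would simply substitute the identity into \eqref{eq:update} with $K=I$ to get $X_{n+1}=X_n+\epsilon\,\nabla_x\log\Pi(X_n;\epsilon)+\sqrt2\,\Xi_n$ with $\Xi_n\sim{\rm N}(0,\epsilon I)$. Since $\int\Pi(x;\epsilon)\,{\rm d}x$ does not depend on $x$, we have $\nabla_x\log\tilde\pi(x;\epsilon)=\nabla_x\log\Pi(x;\epsilon)$, so this is precisely the Euler--Maruyama discretization of the Langevin dynamics \eqref{eq:score_based} with score $s=\nabla\log\tilde\pi(\cdot;\epsilon)$ and step $\Delta\tau=\epsilon$.

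For the last claim I would invoke the standard analysis of the unadjusted Langevin algorithm: the chain above is geometrically ergodic (the Lyapunov argument of Lemma~\ref{lemma2} still applies, since $m(x;\epsilon)\in\mathcal C_M$ for all $x$ forces $\langle\nabla\log\Pi(x;\epsilon),x\rangle\le-\tfrac1{2\epsilon}\|x\|^2$ for $\|x\|$ large, while $\nabla\log\Pi(\cdot;\epsilon)$ is globally Lipschitz because $m(\cdot;\epsilon)$ is), and for a Langevin SDE the invariant law of its Euler--Maruyama chain differs from the SDE's invariant law $\tilde\pi(\cdot;\epsilon)$ by a term controlled by the step size, whence ``samples approximately.'' The delicate point, and the one I expect to cost the most effort, is that here the target $\tilde\pi(\cdot;\epsilon)$ and the step $\Delta\tau=\epsilon$ are governed by the \emph{same} parameter, and the regularity constants of $\log\tilde\pi(\cdot;\epsilon)$ (Lipschitz constant of its gradient, bounds on higher derivatives) grow like negative powers of $\epsilon$ as $\epsilon\to0$; so the usual $O(\Delta\tau)$ bias estimate does not immediately produce a vanishing error, and one must either track the $\epsilon$-dependence explicitly or — as the wording ``for $\epsilon$ sufficiently small'' suggests — read the statement for a \emph{fixed}, sufficiently small $\epsilon$, in which case $\tilde\pi(\cdot;\epsilon)$ is a fixed smooth, strongly-dissipative-at-infinity target and the classical unadjusted-Langevin bias bounds apply verbatim.
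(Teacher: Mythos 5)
Your proposal is correct and takes the same route as the paper: the paper's entire proof is the one-line remark that \eqref{eq:gradient log density estimator} follows from \eqref{eq:probability vectors} and \eqref{eq:tvec}, and your explicit differentiation of $\log(v^\top t(x))^2$ (with the factor $2$ cancelling the $\tfrac{1}{4\epsilon}$ from $K(x)+K(x^{(i)})=2I$) is exactly that computation carried out. Your additional remarks on the Euler--Maruyama identification and on the $\epsilon$-dependence of the bias constants go beyond what the paper argues (it asserts the last two claims without proof), but they are sound and do not conflict with the paper's reading of the statement.
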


\begin{proof}
    Formula (\ref{eq:gradient log density estimator}) follows from (\ref{eq:probability vectors}) and 
    (\ref{eq:tvec}).
\end{proof}

\noindent
Lemma \ref{lemma1} suggests to replace the sampling step~\eqref{eq:update} by the associated split-step scheme
\begin{subequations} \label{eq:update_ss}
    \begin{align}
        X_{n+1/2} &= X_n + \sqrt{2K(X_n)} \Xi_n,\\
        X_{n+1} &=  m(X_{n+1/2};\epsilon).
    \end{align}
\end{subequations}
This scheme now satisfies $X_n \in \mathcal{C}_M$ for all $n\ge 1$ and any choice of $\epsilon$. Similarly, one can replace~\eqref{eq:update2} by the split-step scheme
\begin{subequations} \label{eq:update2_ss}
    \begin{align}
        X_{n+1/2} &= X_n + \sqrt{C(X_n)}\Xi_n,\label{eq:update2_ss_a}\\
        X_{n+1} &=  m(X_{n+1/2};\epsilon). \label{eq:update2_ss_b}
    \end{align}
\end{subequations}
These split-step schemes have been used in our numerical experiments. We note that (\ref{eq:update_ss}b) can be viewed as a de-noising step applied to a noisy $X_{n+1/2}$. See \cite{MD24} for a recent survey of de-noising techniques used in image processing and their connection to SGM.

\begin{figure}[htbp]
\centering
\includegraphics[width = 0.4\columnwidth]{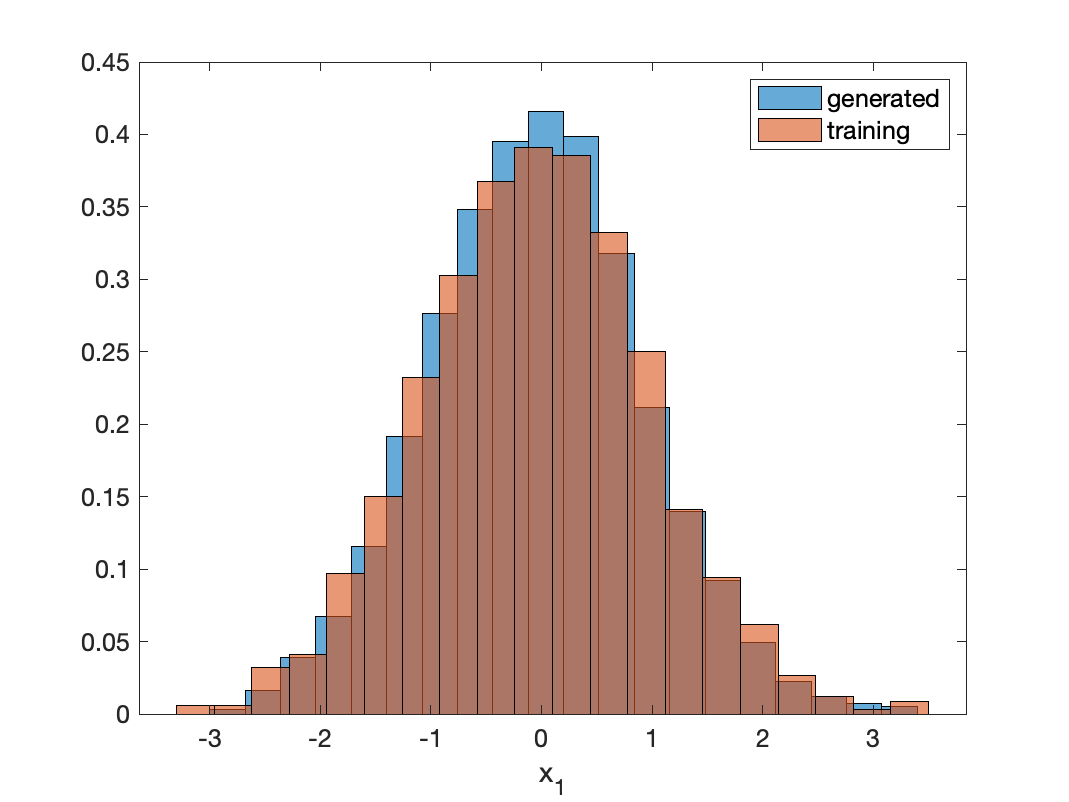} $\qquad$
\includegraphics[width = 0.4\columnwidth]{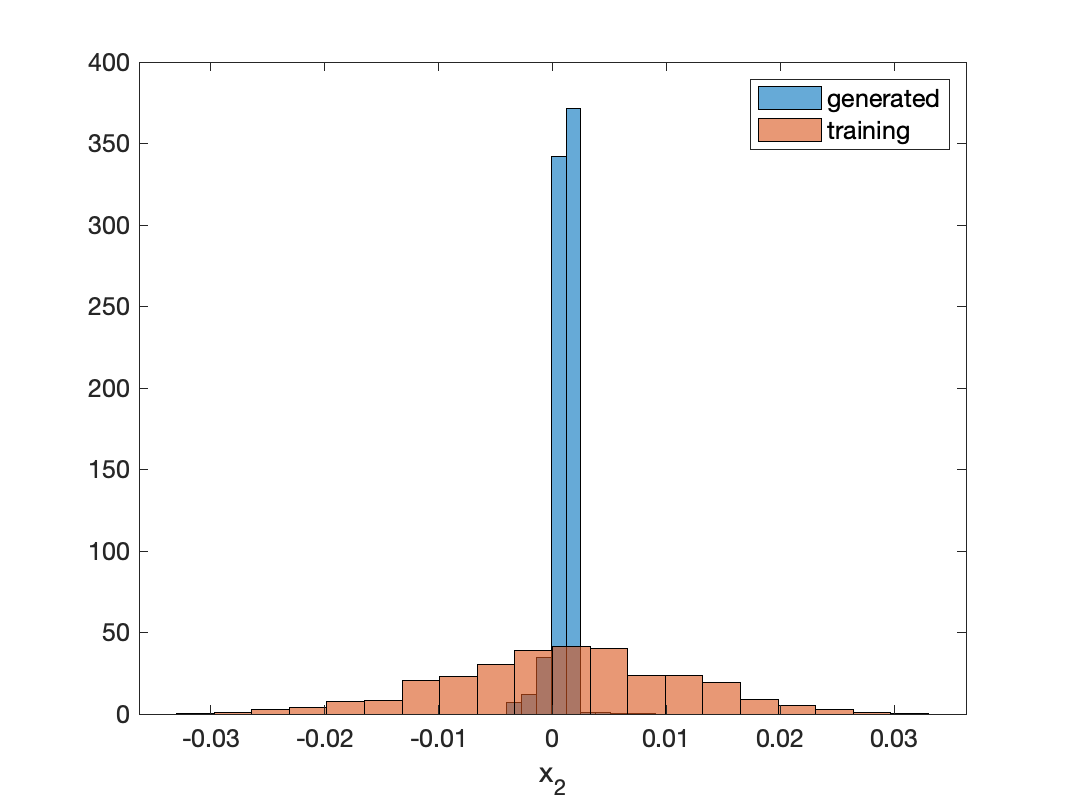}
\caption{Histograms of the $x_1$- and $x_2$-components of the training as well as generated data for Example~\ref{ex:2Db}. One finds that the split-step scheme effectively denoises the $x_2$-component while faithfully reproducing the standard normal distribution in the $x_1$-component.} \label{fig:linear example}
\end{figure}


\begin{example} \label{ex:2Db}
We return to Example~\ref{ex:2D} and demonstrate the performance of the proposed split-step scheme (\ref{eq:update_ss}) for this simple linear problem. We sample $M=10^3$ training samples from the Gaussian distribution with mean zero and covariance matrix 
\begin{equation}
C = \left( \begin{array}{cc} 1+\nu & 0 \\ 0 & 
\nu \end{array} \right)
\end{equation}
with $\nu = 10^{-4}$. The split-step scheme (\ref{eq:update_ss}) is implemented with $\epsilon = 0.1$ and a total of $10^5$ samples are generated. See Figure \ref{fig:linear example} for numerical results. It is found that the proposed scheme samples correctly from the normal distribution in $x_1$ and, at the same time, denoises the $x_2$-component. We recall that a standard Euler--Maruyama discretisation would require step-sizes $\Delta \tau < 0.0002$ and would sample from the noisy distribution ${\rm N}(0,C)$.
\end{example}

\noindent
We finally discuss the computational complexity of the proposed algorithms in terms of the size, $M$, of the training data and the dimension, $d$, of the samples space $\mathbb{R}^d$. In the off-line phase of the algorithm, the accelerated Sinkhorn algorithm of \cite{WR20} converges essentially in order $\mathcal{O}(M^0)$ iterations and each iteration requires matrix-vector multiplications; hence being of order $\mathcal{O}(M^2)$.
Computing $T$ involves calculating $\mathcal{O}(M^2)$ distances between vectors in $\mathbb{R}^d$. The online phase requires the multiplication of the $d\times M$-dimensional data matrix $\mathcal{X}$ with the $M$-dimensional weight vector $p(x)$. The computation of $p(x)$ in turn involves the computation of $M$ distances $\|x-x^{(j)}\|^2$ and inner products of $M$-dimensional vectors. Overall the computational complexity of the online phase is of order $\mathcal{O}(dM)$. 

The more problematic part is the accuracy of the Schr\"odinger bridge approximation to the semi-group 
$\exp(\epsilon \mathcal{L})$. Optimal scaling of $\epsilon$ as a function of $M$ leads to an approximation error of order $\mathcal{O}(M^{-2/(8+d)})$, which requires an exponential increase of the sample size $M$ as a function of the dimension, $d$, of sample space in order to reach a desired accuracy. While improvements in accuracy can be obtained by the variable bandwidth techniques discussed next for low dimensional problems, a localized formulation of the Schr\"odinger bridge sampler, as proposed in \cite{GR24}, can lead to an effectively dimension-independent accuracy. 

%
\subsection{Variable bandwidth diffusion}\label{sec:vb}
%

It is well-known from the literature on diffusion maps that a variable bandwidth can improve the approximation quality for fixed sample size $M$ \cite{DM_variable_bw}. Here we utilize the same idea. However, we no longer insist on approximating the standard generator with $K=I$, since we only wish to sample from the distribution $\pi$ rapidly. Hence, we consider
reversible diffusion processes (\ref{eq:reversible diffusion}) with
\begin{equation} \label{eq:Krho}
K(x) = \rho (x) I.
\end{equation}

It is an active area of research to select a $\rho$ that increases the spectral gap of the associated
generator $\mathcal{L}$, given by
\begin{equation} \label{eq:generator_rho}
\mathcal{L}f = \pi^{-1} \nabla \cdot (\pi \rho \nabla f)
= \nabla \cdot (\rho \nabla f) + \rho \nabla \log \pi\cdot \nabla f ,
\end{equation}
while not increasing computational complexity. A larger spectral gap implies a faster convergence rate \cite{ReyBellet2016ImprovingTC}, indicating that the generated samples are closer to the reference at a finite time, exhibiting a high accuracy. We demonstrate numerically in Section \ref{sec:numerics} that $\rho$ can indeed be used to increase the sampling accuracy. More specifically, the bandwidth $\rho(x)$ is chosen as 
\begin{equation} \label{eq:variable_bandwidth}
\rho(x) = \pi(x)^\beta,
\end{equation}
where $\beta\le 0$ is a parameter and the unknown sampling distribution $\pi$ is approximated by an inexpensive low accuracy density estimator \cite{vb_kde_Salgado-Ugarte, terrell1992variable}. 
One finds that the variable bandwidth parameter $\beta$ in~\eqref{eq:variable_bandwidth} and the
scaling parameter $\epsilon$ both influence the effective step-size in the Markov chain approximation
(\ref{eq:DM_approximation}) for~\eqref{eq:Krho}. 
In order to disentangle the two scaling effects we modify the construction of the entries~\eqref{eq:tij}
in $T_\epsilon$ as follows. We first compute $\pi_i \approx \pi(x^{(i)})$ over all data points and then rescale these typically unnormalized densities:
\begin{equation}
    \tilde{\pi}_i =  Z^{-1}\pi_i, \qquad Z := \frac{1}{M}
    \sum_{j}\pi_j.
\end{equation}
The variable scaling length is then set to
\begin{equation}
    \rho_i = \tilde{\pi}_i^\beta = Z^{-\beta} \pi_i^\beta
\end{equation}
for $i=1,\ldots,M$, i.e., $K(x^{(i)}) = \rho_i I$ in~\eqref{eq:tij} and, more generally,
\begin{equation}\label{vb_K}
K(x) = Z^{-\beta} \pi(x)^\beta.
\end{equation}
The proposed scaling implies that a constant target density $\pi(x)$ leads to $K(x^{(i)}) = I$ in~\eqref{eq:tij} regardless of the chosen $\beta$ value. See Section~\ref{sec:numerics} below for our numerical findings.

In order to derive an appropriate time-stepping scheme, we note that the drift term in the It\^o formulation~\eqref{eq:reversible diffusion} with $K(x) = \rho(x)I$ can be expressed as
\begin{equation} \label{eq:drift_estimator}
\mathcal{L} \,\mbox{id}(x)
= \rho \nabla \log \pi + \nabla \rho
\end{equation}
and, hence, it holds that
\begin{equation}
m(x;\epsilon) = \mathcal{X} p (x) 
\approx \exp(\epsilon \mathcal{L}) \,
\mbox{id}(x) = \mu(x;\epsilon),
\end{equation}
and
\begin{align}
\exp(\epsilon \mathcal{L})\, \mbox{id}(x) \approx x + \epsilon \mathcal{L}\,\mbox{id}(x) = 
x + \epsilon s(x)
\end{align}
as desired (cf. \eqref{eq:score_approximation}). We emphasize that the drift now satisfies $s(x) = \rho(x) \nabla \log \pi(x) + \nabla \rho(x)$ and is no longer equivalent to the score $\nabla \log \pi(x)$. For the numerical variable bandwidth experiments conducted in Section \ref{sec:numerics}, we therefore used (\ref{eq:update_ss}) with $K(x) = \rho(x)I$. We remark that our aim is not to reproduce the actual dynamics of an underlying stochastic process, i.e.~the process associated with the generator \eqref{eq:generator_rho}, but rather to sample from the distribution $\pi(x)$. 

%
\section{Bayesian inference and conditional sampling}
\label{sec:conditional}
%

We note that the previously discussed sampling methods can easily be extended to reversible diffusion processes of the form
\begin{equation}
\dot{X}_\tau = - \nabla V(X_\tau) + \nabla \log \pi(X_\tau) + \sqrt{2}\dot{W}_\tau,
\end{equation}
where $V(x)$ denotes a known potential such as the negative log-likelihood function in case of Bayesian inference with $\pi(x)$ taking the role of the prior distribution. We obtain, for example, the adjusted time-stepping scheme
\begin{align}
        X_{n+1/2} &= X_n - \epsilon \nabla V(X_n) + \sqrt{2} \,\Xi_n,\\
        X_{n+1} &= m(X_{n+1/2};\epsilon)
\end{align}
for given samples $\{x^{(i)}\}_{i=1}^M$ from an unknown (prior) distribution $\pi(x)$. This numerical scheme approximately samples from the invariant distribution
$$
\tilde \pi(x) \propto e^{-V(x)} \pi(x).
$$

\begin{remark}
We briefly divert to optimization of a regularised minimization problem of the form
\begin{equation}
x^\ast = \arg \min_x \left\{V(x) - \log \pi (x) \right\}
\end{equation}
for given potential $V(x)$ and regulariser $-\log \pi(x)$. Again assuming that only samples $\{x^{(i)}\}_{i=1}^M$ of $\pi(x)$ are available, an algorithm for approximating $x^\ast$ can be defined as follows:
\begin{subequations} 
\begin{align}
        x_{n+1/2} &= x_n - \epsilon \nabla V(x_n),\\
        x_{n+1} &= m(x_{n+1/2};\epsilon).
\end{align}
\end{subequations}
Here $m (x;\epsilon)=\mathcal{X} p (x)$ 
approximates the optimization update associated with the regulariser 
$- \log \pi(x)$. The iteration is stable and any limiting point $x_\infty$ is contained in the convex hull of the data points $\{x^{(i)}\}_{i=1}^M$.
\end{remark}

\noindent
The sampling schemes~\eqref{eq:update_ss} and \eqref{eq:update2_ss} can furthermore be extended to conditional generative modeling. More specifically, consider a random variable $x = (y,z)$, which we condition on the first component for given $y= y^\ast$. We assume here that the decomposition of $x$ into the parts $z$ and $y$ is given, and we wish to sample from $\pi(z|y^\ast)$ given samples $x^{(i)} = (y^{(i)},z^{(i)})$, $i=1,\ldots,M$, from the joint distribution $\pi(x) = \pi(y,z)$.
 
In order to perform the required conditional sampling, we propose a method which combines approximate Bayesian computation (ABC) with our diffusion map based sampling algorithm. 
As before, we construct vectors of conditional expectation value $m (x;\epsilon)$ based on the samples $\{x^{(i)}\}_{i=1}^M$. We assume that the bandwidth parameter $\epsilon$ used in the diffusion map approximation is also applied in the ABC misfit function, i.e.,
\begin{equation}
L(y,y^\ast) = \frac{1}{2\epsilon}\|y-y^\ast\|^2.
\end{equation}
This suggests the following split-step approximation scheme. Given the last sample $X_n = (Y_n,Z_n)$, we first update the $y$-component using
\begin{equation}
\hat Y_{n} = Y_n - \epsilon \nabla_y L(Y_n,y^\ast) = y^\ast.
\end{equation}
In other words, we replace the current $X_n = (Y_n,Z_n)$ by $\hat X_n = (y^\ast,Z_n)$. Next we apply the
split-step scheme~\eqref{eq:update_ss} to $\hat X_n$, i.e.,
\begin{align}
X_{n+1/2} &= \hat X_n + \sqrt{2 K(\hat X_n)} \Xi_n,\\
X_{n+1} &=  m(X_{n+1/2};\epsilon),
\end{align}
where $m(x;\epsilon) = \mathcal{X} p(x)$ with $\mathcal{X} = (x^{(1)},\ldots,x^{(M)})$, 
and the definition of the probability vectors $p (x)$ follows from~\eqref{eq:probability vectors}. The split-step scheme~\eqref{eq:update2_ss} generalises along the same lines.

We show in Section~\ref{s.stochpara} how conditional sampling with the Schr\"odinger bridge enables drawing inaccessible random variables compatible with a known macroscopic state, a problem known as stochastic subgrid-scale parametrization. We further employ conditional sampling in Section~\ref{sec.L63} to draw random trajectories of a dynamical system where a future state is conditioned on the current state. 

\section{Numerical experiments} 
\label{sec:numerics}

In this section, we illustrate our method through three numerical examples encompassing different ranges and focal points. In the first two examples, we generate samples using synthetic datasets with increasing dimensions. Our emphasis is on exploring the impact of different diffusions and of employing a variable bandwidth kernel. In the third example, we showcase the proposed conditional generative modeling in Section~\ref{sec:conditional}, applied to a stochastic subgrid-scale parametrization problem. In the final example in Section~\ref{sec.L63}, we employ conditional sampling to generate trajectories of the Lorenz-63 system \cite{Lorenz63} from previously computed training samples.

\subsection{One-dimensional manifold}
To illustrate how well the proposed methods generate statistically reliable samples we consider first the case of $M$ samples $x\in\R^2$. In particular, we consider samples with a polar representation with radius $r=1+\sigma_r \xi_r$ and angle $\theta=\pi/4 + \sigma_\theta \xi_\theta$ with $\sigma_r=0.06$ and $\sigma_\theta=0.6$ and $\xi_{r,\theta}\sim {\rm{N}}(0,1)$. We used $M=2,000$ samples to learn the transition kernel and then generated $10,000$ new samples with an initial condition from the data-sparse tail of the distribution, chosen to be the data point corresponding to the smallest angle. 

We begin by investigating the effect of the two noise models proposed, namely a constant diffusion as in~\eqref{eq:update} with constant bandwidth $K(x)=I$ and the case when the diffusion reproduces the sample covariance $C$ as in~\eqref{eq:update2}. 
In both cases we use a constant bandwidth $K=I$ in \eqref{eq:tij} and \eqref{eq:tvec} when forming the Schr\"odinger bridge.  
We employ a Langevin sampler with the splitting scheme~\eqref{eq:update_ss} with $\epsilon=0.009$ and~\eqref{eq:update2_ss}, respectively.  Figure~\ref{fig.GaussRing_norho_C_vs_K} shows that choosing the sample covariance as the noise model is clearly advantageous. Whereas both noise models generate samples that reproduce the angular distribution the noise model using a constant diffusion is overdiffusive in the radial direction. In contrast the noise model using the sample covariance nicely reproduces the radial distribution.

We now investigate the effect of a variable bandwidth $K(x)$. We employ the noise model~\eqref{eq:update2_ss} with the sample covariance but use $K(x)$ to determine the diffusion map (cf.~\eqref{eq:tvec}).    
The Langevin sampler~\eqref{eq:update2_ss} is again initialized with the coordinates of the data point corresponding to the smallest angle in the data-sparse tail. Figure~\ref{fig.GaussRing_norho_vs_rho_xy} (left) shows the samples projected onto the convex hull of the data, i.e. outputs of step~\eqref{eq:update2_ss_b}, when a uniform bandwidth $K(x)=I$ with $\epsilon=0.009$ is employed. Although the mean behaviour is well reproduced, it is seen that the generative model fails near the data-sparse tails for large and small angles. Here the value of $\epsilon$ is too small to generate significant diffusion and the samples are aligned on the (linear) convex hull of the widely separated data samples. To mitigate against this behaviour we employ a variable bandwidth $\rho(x)=\pi^\beta$ with $\beta=-1/5$ and a kernel density estimate $\pi(x)$. Figure~\ref{fig.GaussRing_norho_vs_rho_xy} (right) shows how the variable bandwidth kernel better reproduces the distribution in the data-sparse tail regions. Figure~\ref{fig.GaussRing_norho_vs_rho_angle_radius} shows the empirical histograms for the radius and the angle variables of the noisy samples corresponding to Figure~\ref{fig.GaussRing_norho_vs_rho_xy} (i.e. outputs of step~\eqref{eq:update2_ss_a}). Whereas both, the uniform and the variable bandwidth kernels, reproduce the radial distribution very well, the uniform bandwidth fails to reproduce the angular distribution in the tails where the diffusion is not sufficiently strong to allow for efficient mixing and the diffusion process gets stuck in the data sparse region. 

We have seen that a constant uniform bandwidth generates samples which are concentrated in the bulk of the data and which are overly diffusive in the radial direction (cf. Figure~\ref{fig.GaussRing_norho_C_vs_K} (right)). One may wonder if employing a smaller virtual time step $\Delta \tau<\epsilon$ in the Langevin sampler~\eqref{eq:update_tau} will allow a Langevin sampler with constant bandwidth to generate more faithful samples. Figure~\ref{fig.GaussRing_vardt_angle_radius} shows that choosing a smaller time step $\Delta \tau$ in~\eqref{eq:update2_tau}, here with $\Delta \tau=\epsilon/4$ is indeed able to reproduce the radial distribution. However, if the Langevin sampler is initialised with a data point in the center of the data samples, it is not able to diffuse into the tail of the distribution distribution, leading to an under-diffusive empirical histogram for the angles. 

The numerical experiments above suggest that we employ a noise model using the sample covariance $C$ combined with a variable bandwidth $K(x)$ to control eventual data sparse regions. When employing a variable bandwidth our method contains two hyper-parameters which require tuning: the bandwidth factor $\epsilon$ and the exponent $\beta$ in the arbitrary choice of the variable bandwidth $\rho(x)$. Their role will be explored in the following subsection.

\begin{figure}[htbp]
\centering
\includegraphics[width = 0.32\columnwidth]{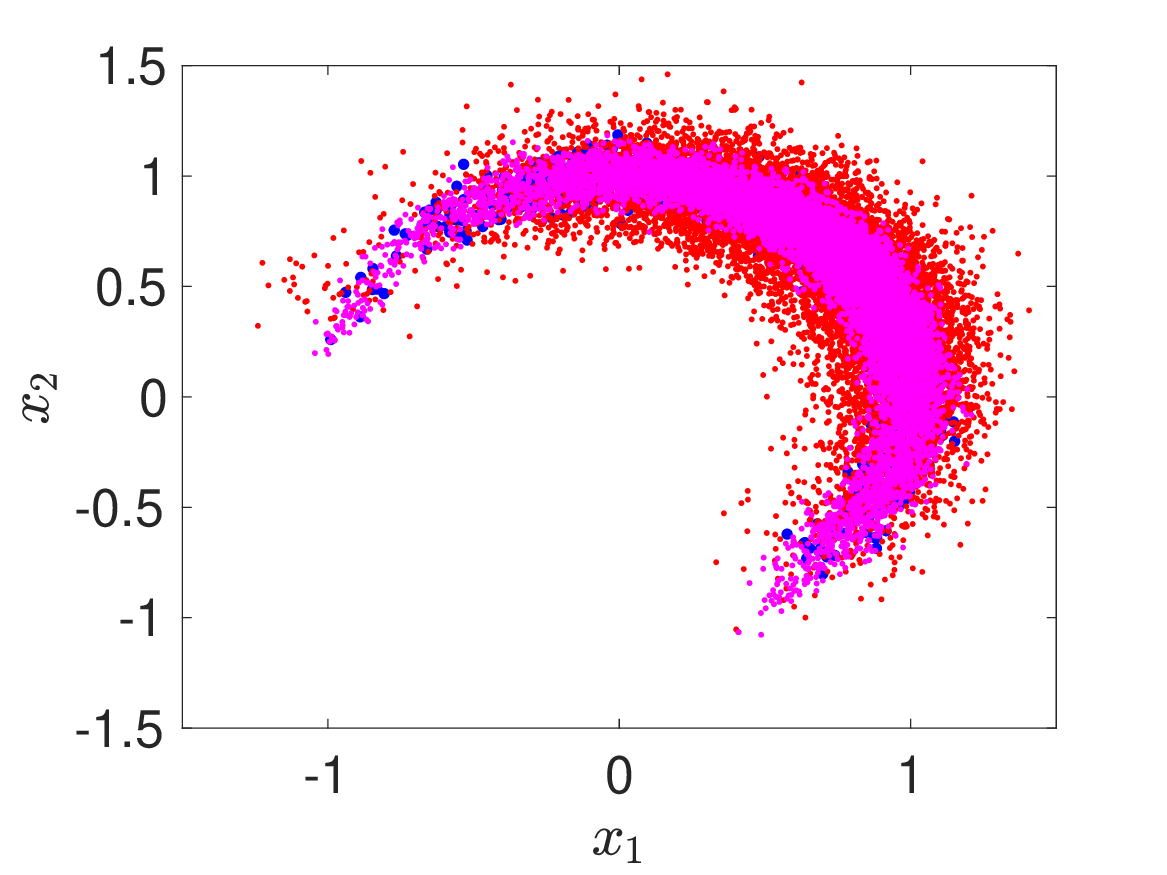}
\includegraphics[width = 0.32\columnwidth]{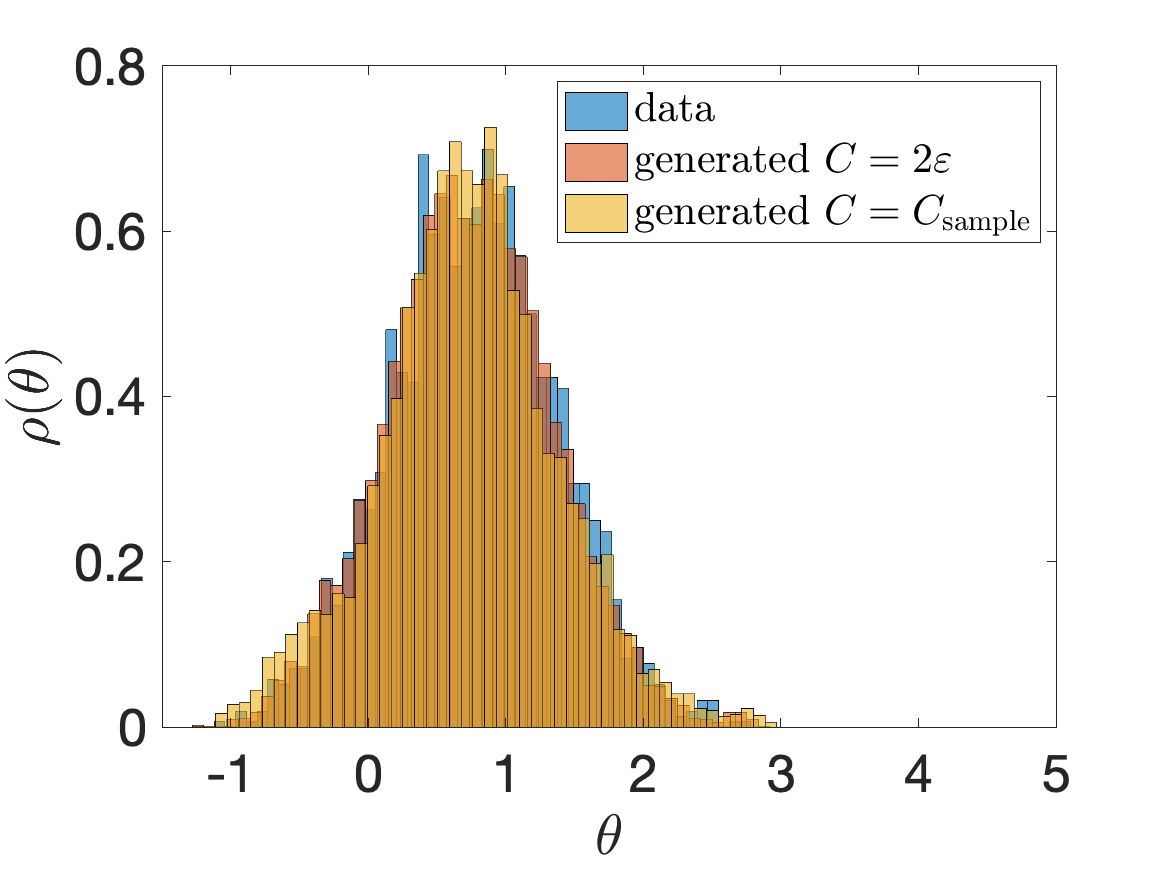}
\includegraphics[width = 0.32\columnwidth]{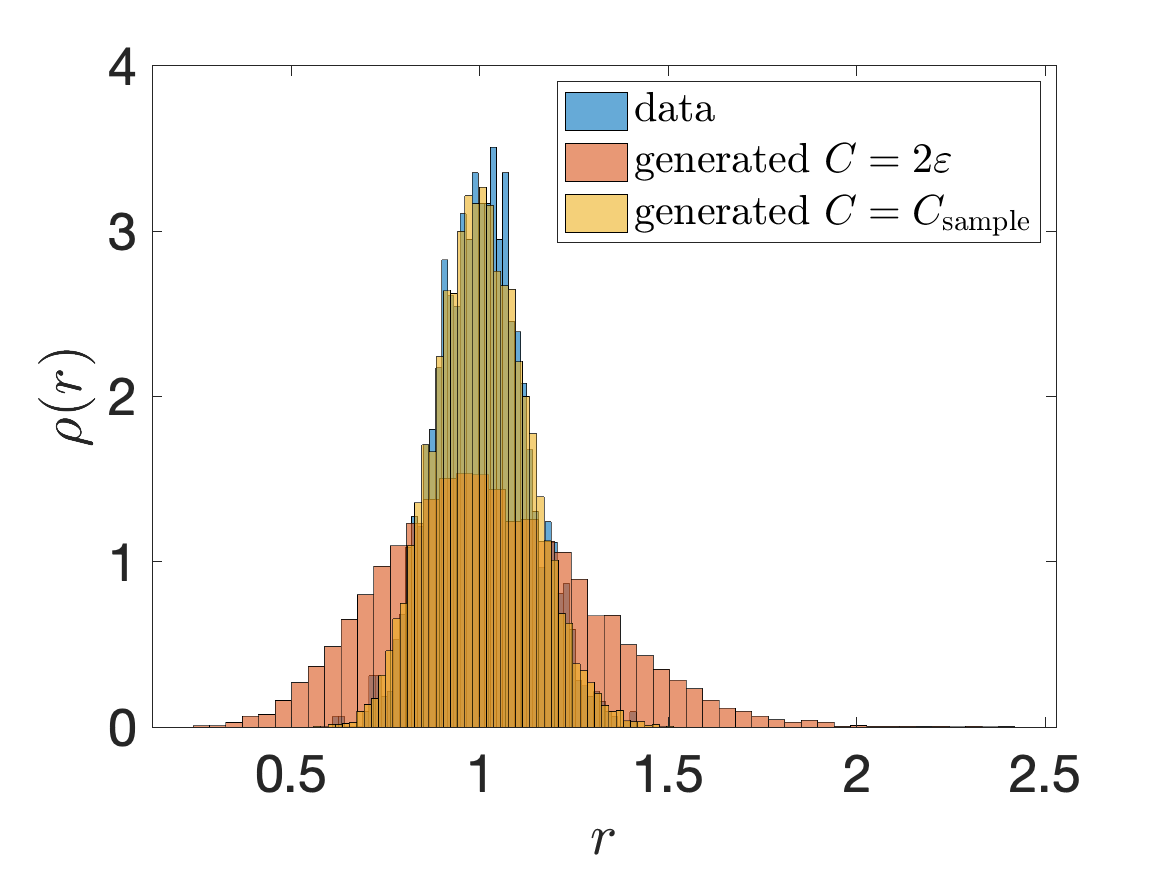}
\caption{Comparison of the different noise models employed by the generative model. We employed a constant bandwidth with $\epsilon=0.009$. Left: Original (blue) and generated data using a constant covariance (red) and the sample covariance $C(x)$ (magenta). Middle: Empirical histograms of the angular variable $\theta$. Right: Empirical histograms of the radial variable $r$.}
\label{fig.GaussRing_norho_C_vs_K}
\end{figure}
%

\begin{figure}[htbp]
\centering
\includegraphics[width = 0.4\columnwidth]{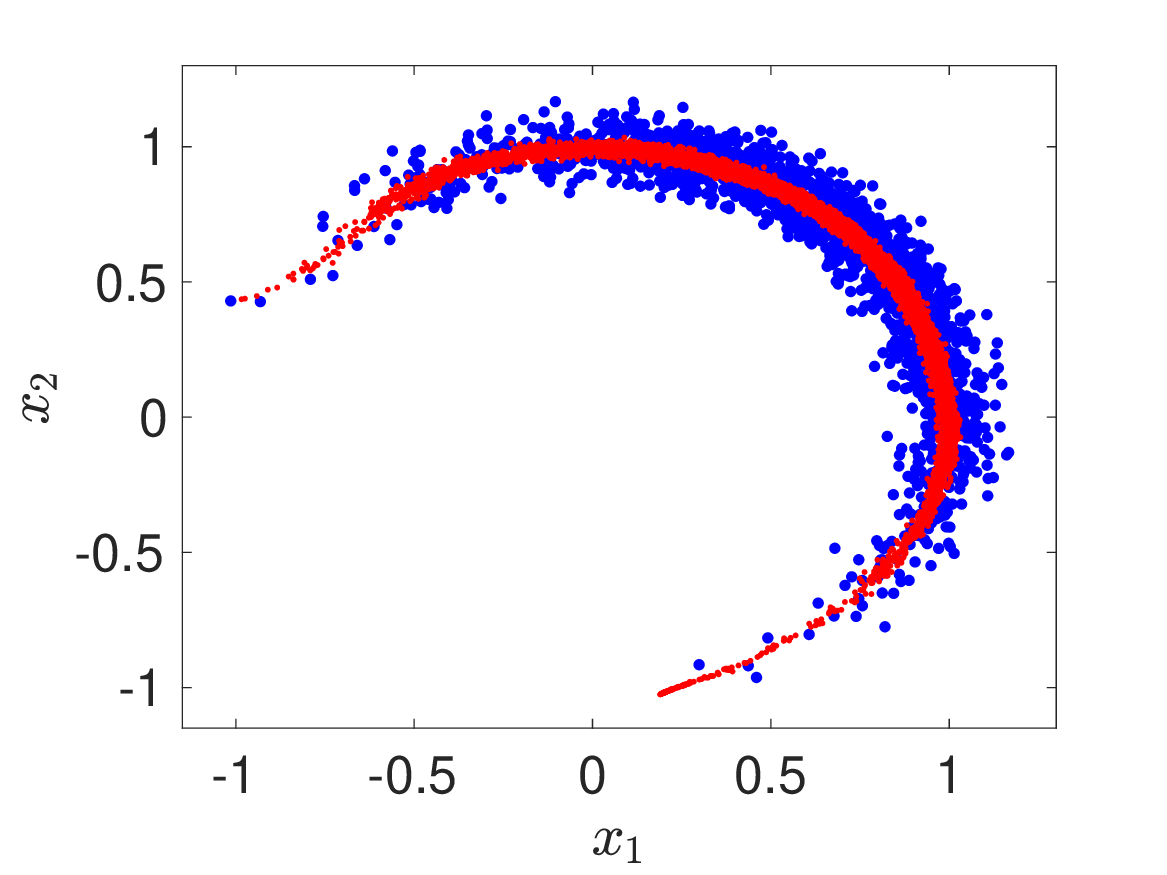} $\qquad$
\includegraphics[width = 0.4\columnwidth]{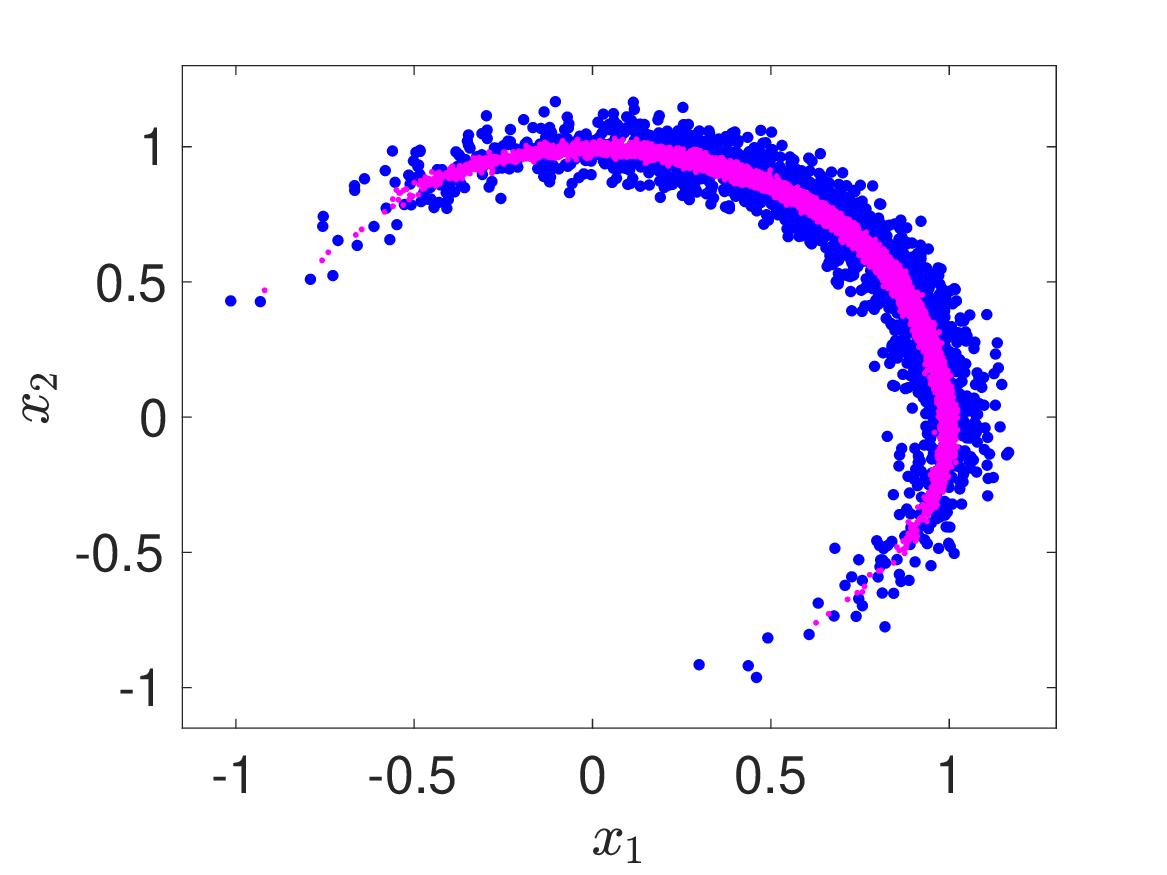}
\caption{Effect of a variable bandwidth $K(x) = \rho(x)I$ in data-sparse regions. For the generative model the Langevin sampler~\eqref{eq:update2_ss} is used and we set $\epsilon=0.009$. Results are shown for the output of step~\eqref{eq:update2_ss_b}. Left: Original (blue) and generated data for a constant bandwidth $K(x) = I$ (red). Right: Original (blue) and generated data for a variable bandwidth $K(x) = \rho(x)I$ with $\rho(x)=\pi(x)^\beta$ with $\beta=-1/5$ (magenta).}
\label{fig.GaussRing_norho_vs_rho_xy}
\end{figure}

\begin{figure}[htbp]
\centering
\includegraphics[width = 0.4\columnwidth]{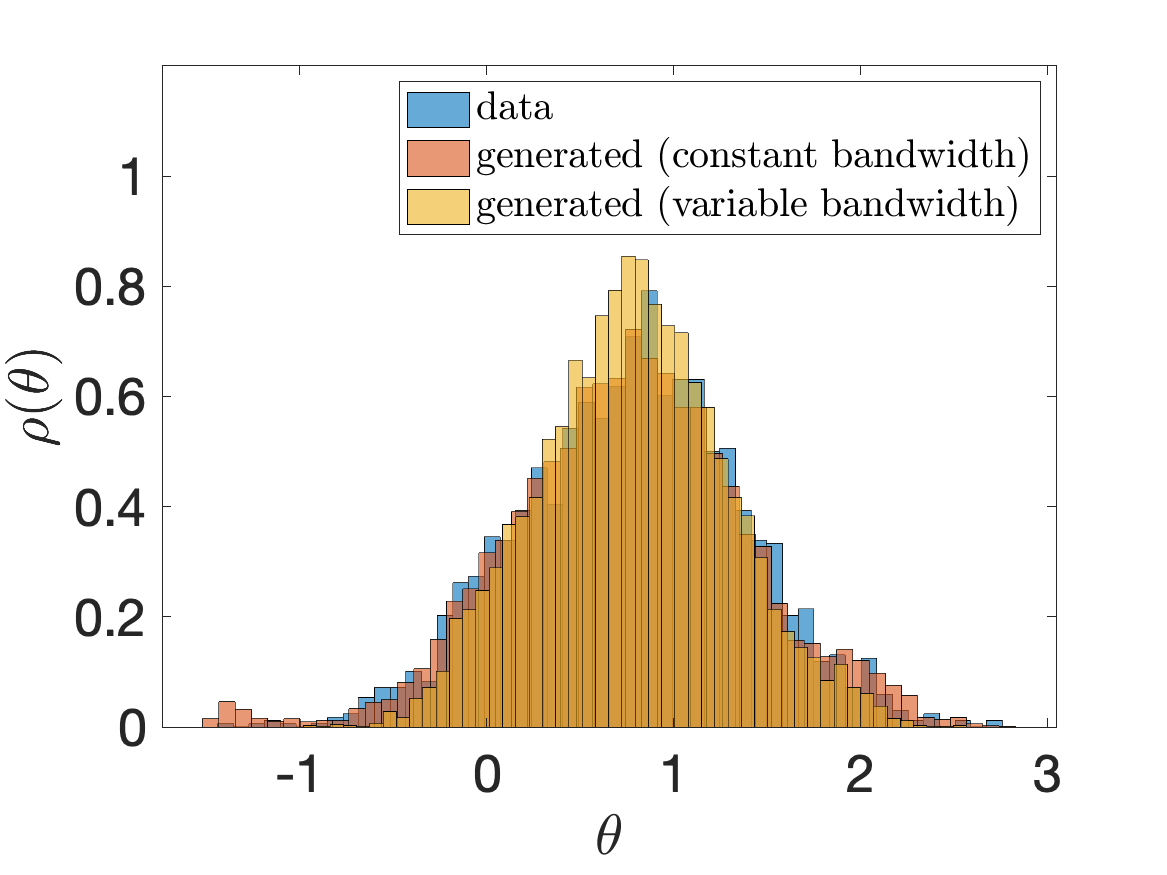} $\qquad$
\includegraphics[width = 0.4\columnwidth]{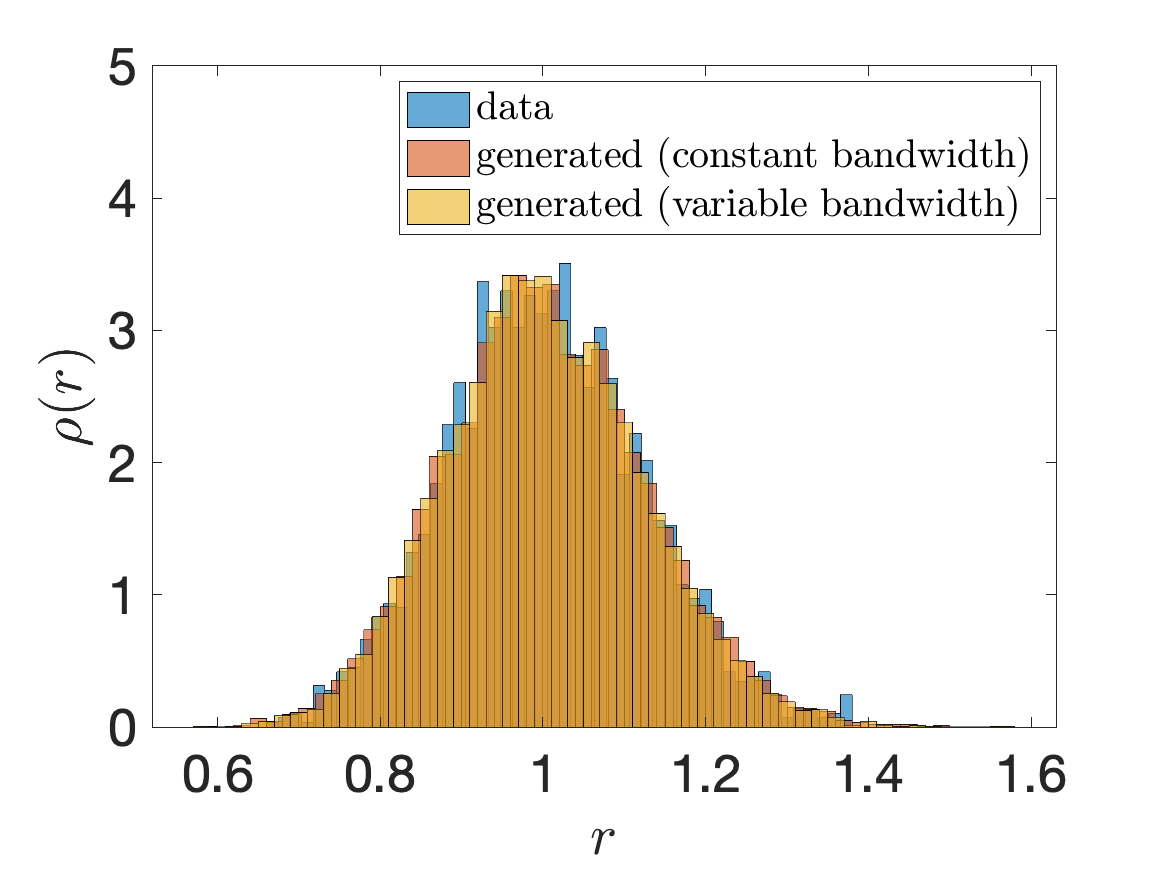}
\caption{Effect of a variable bandwidth $K(x) = \rho(x)I$ on the angular and radial distributions (left and right, respectively). Shown are the original data (blue),  generated data for a constant bandwidth $K(x) = I$ (red) and for a variable bandwidth $K(x) = \rho(x)I$ with $\rho(x)=\pi(x)^\beta$ with $\beta=-1/5$. The data were generated using a constant covariance noise model in~\eqref{eq:update2_ss} and  $\epsilon=0.009$.}
\label{fig.GaussRing_norho_vs_rho_angle_radius}
\end{figure}

\begin{figure}[htbp]
\centering
\includegraphics[width = 0.32\columnwidth]{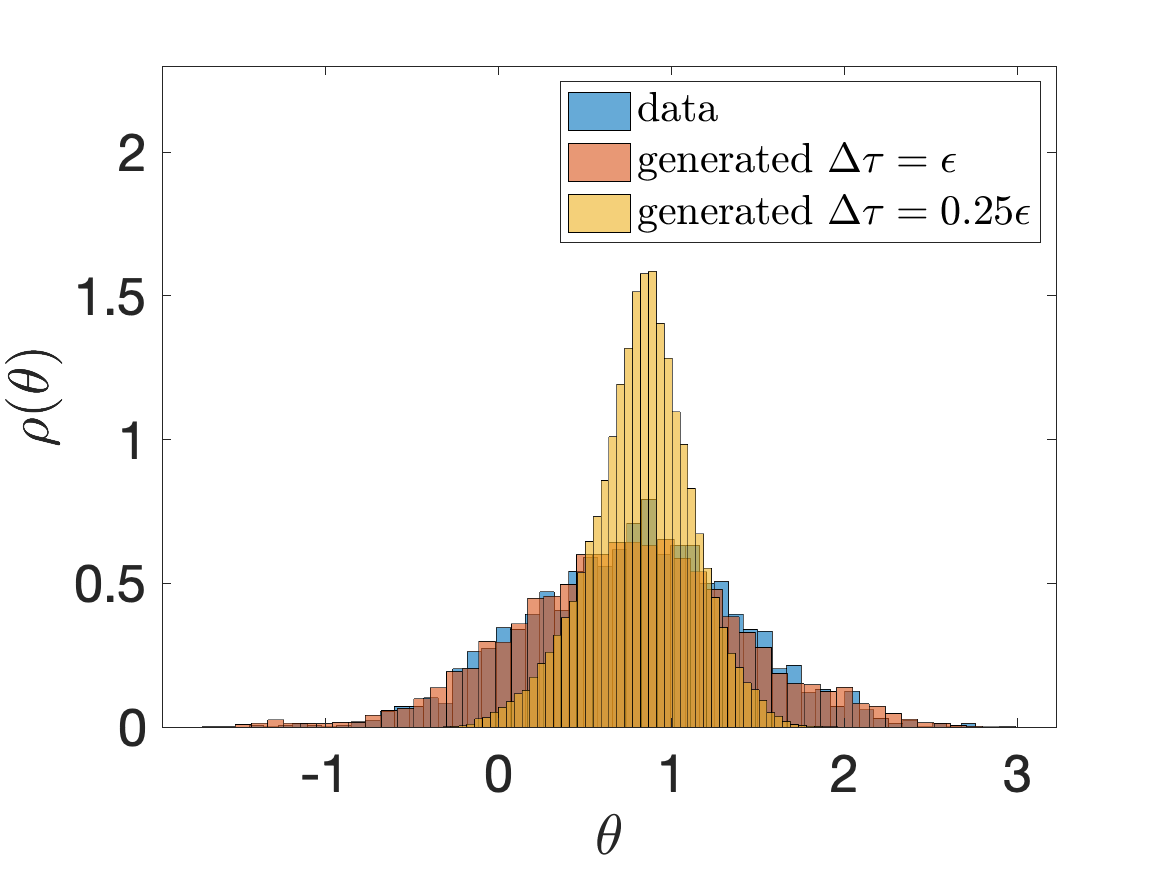}
\includegraphics[width = 0.32\columnwidth]{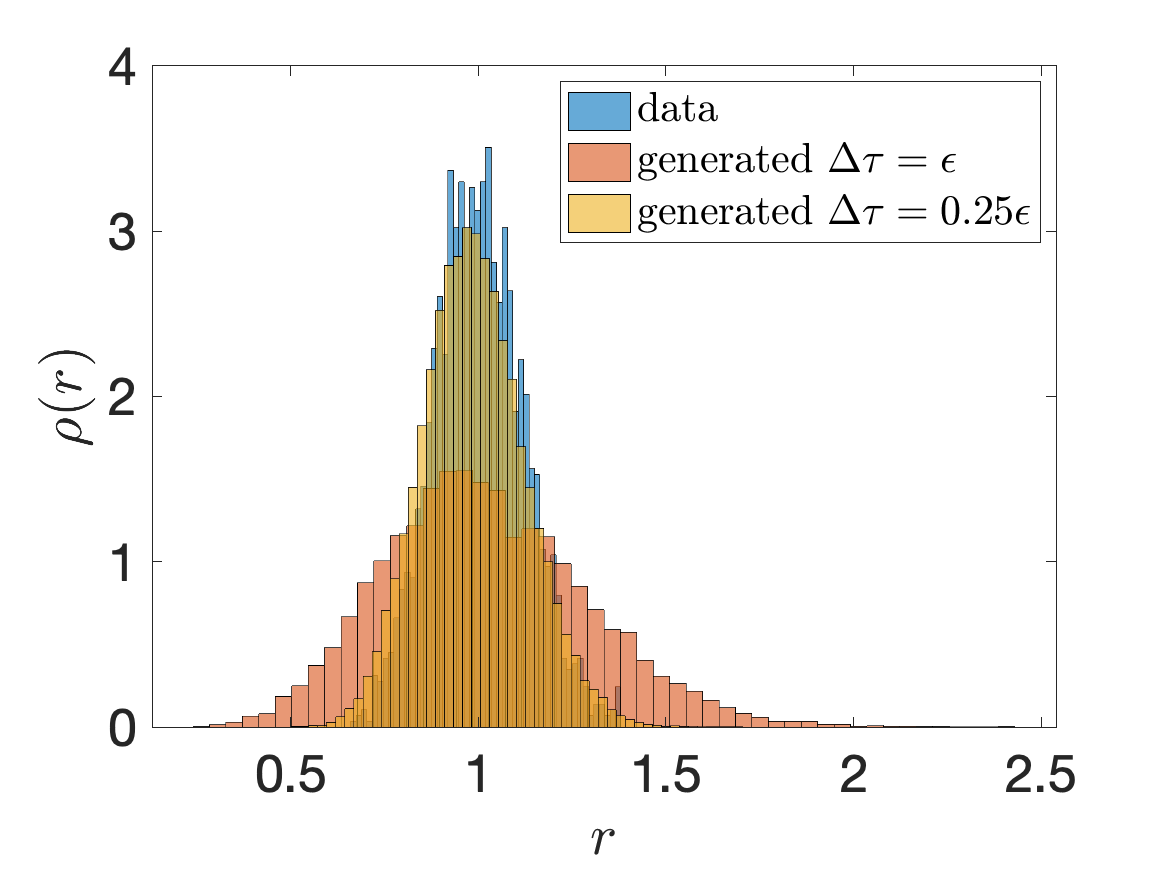}
\includegraphics[width = 0.32\columnwidth]{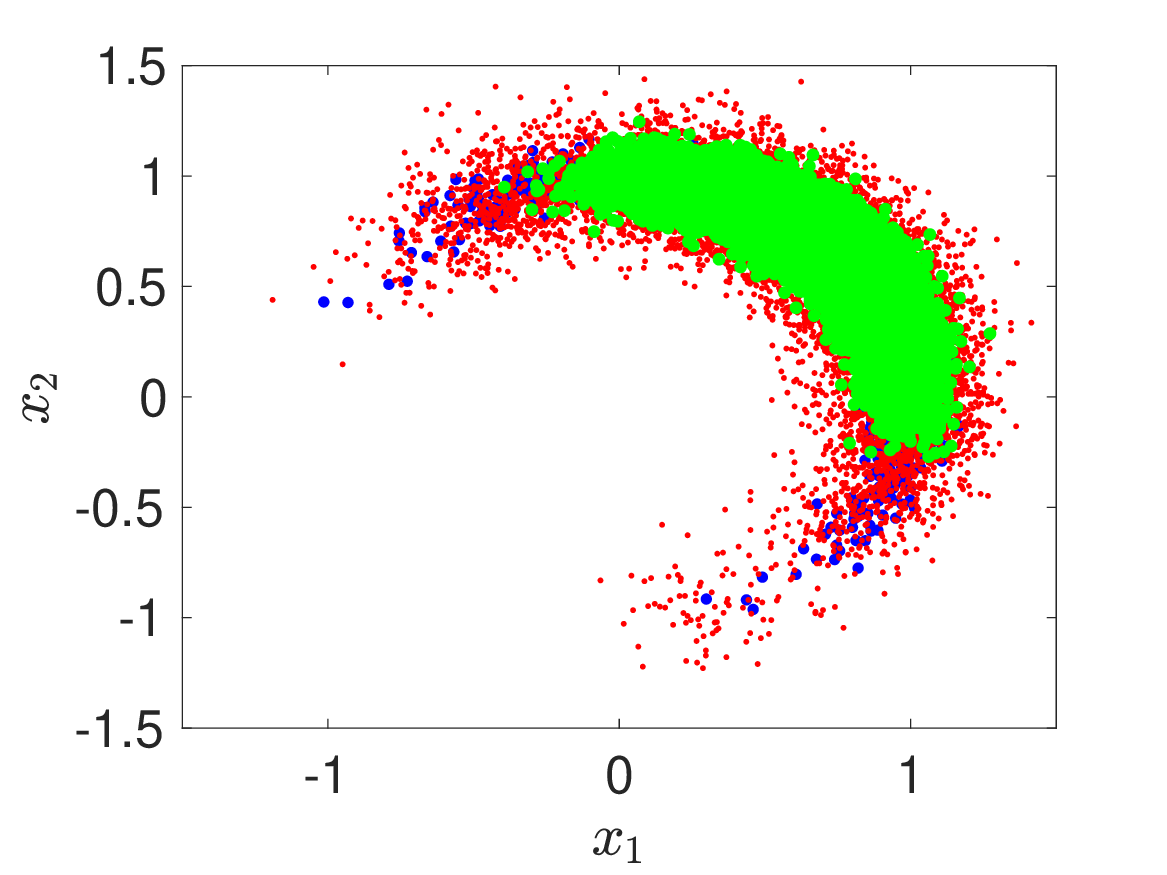}
\caption{Effect of a variable time step $\Delta \tau$ in the Langevin sampler \eqref{eq:update_ss} with constant diffusion $K=1$. Results are shown for the original data, and for $\Delta \tau=\epsilon$ and $\Delta \tau=\epsilon/4$. Throughout a constant bandwidth is used. Left: Empirical histogram of the angular variable $\theta$. Middle:  Empirical histograms of the radial variable $r$. Right: Original (blue) and generated data in the $(x_1,x_2)$-plane with $\Delta \tau=1$ (red) and with $\Delta \tau=\epsilon/4$ (green).}
\label{fig.GaussRing_vardt_angle_radius}
\end{figure}
%
\subsection{Multi-dimensional manifolds} 
In this numerical example, we show our proposed method on hyper semi-spheres of dimension $d = \{3, 4, 9\}$, using both a fixed bandwidth kernel and a variable bandwidth kernel. Data are generated by firstly sampling $z^{(i)} = (z_1^{(i)},\cdots, z_d^{(i)})$ from a $d$-dimensional standard normal distribution, and then setting $y^{(i)} = (z^{(i)}_1,\cdots,\alpha z^{(i)}_d)$, with $\alpha = 5$ to promote non-uniformity. Finally, the samples $x^{(i)}$ are obtained by normalizing $y^{(i)}$ to achieve the unit length, i.e., $y^{(i)}/\left\Vert y^{(i)}\right\Vert$, and perturbing $y^{(i)}/\left\Vert y^{(i)}\right\Vert$ in the radial direction with $U(0, 0.01)$ noise. An instance of the target samples of three dimensions can be seen in Figure~\ref{fig:samples_3d}. Given a bandwidth $\epsilon$ and a bandwidth function $\rho(x)$, we implement the proposed scheme~\eqref{eq:update2_ss}. For the fixed bandwidth kernel  we set $\rho(x) = 1$. For the variable bandwidth kernel~\eqref{vb_K}, we set $\rho(x) = \pi(x)^\beta$, with $\beta < 0$, and $\pi(x)$ is approximated using a kernel density estimator. We use $M=1,000$ training samples to learn the transition kernel and run a Langevin sampler to generate $50,000$ samples, with the initial data point being $(1,0,\cdots,0)\in \mathbb R^d$. To obtain a better mixing of the Langevin sampler, we take one every $20$ samples of the last $20,000$ generated samples of the chain, resulting in a total of $1,000$ samples. To evaluate the quality of the generated samples, 
we compute the regularized optimal transport (OT) distance between the generated samples $\{x_{\mathrm{gen}}^{(i)}\}_{i = 1}^{M_\mathrm{g}}$ and the original reference samples $\{x_{\mathrm{ref}}^{(j)}\}_{j = 1}^{M_\mathrm{r}}$. 
The regularized OT distance with entropy penalty $1/\lambda$ is defined as
\begin{align*}
    d_\lambda(x_{\mathrm{gen}}, x_{\mathrm{ref}}) = \min_P \sum_{i,j} P_{ij} C_{ij} - \frac{1}{\lambda} h(P),
\end{align*}
subject to the constraints that 
\begin{align*}
\sum_{j=1}^{M_\mathrm{r}} P_{ij} = M_\mathrm{g}^{-1}, \qquad
\sum_{i=1}^{M_\mathrm{g}} P_{ij} = M_\mathrm{r}^{-1},
\end{align*}
where 
\begin{align*}
    h(P) = -\sum_{i,j} P_{ij} \log P_{ij}
\end{align*}
is the information entropy. The entries of $C\in \mathbb R^{M_g\times M_r}$ are set to be the pairwise Euclidean distances between $\{x_{\mathrm{gen}}^{(i)}\}_{i = 1}^{M_\mathrm{g}}$ and $\{x_{\mathrm{ref}}^{(j)}\}_{j = 1}^{M_\mathrm{r}}$, and each sample is assigned equal weight marginally. 
We compute the OT distance using the Sinkhorn--Knopp algorithm~\cite{sinkhorn, sinkhorn_knopp}. The number of reference samples is chosen to be $M_\mathrm{r} = 50,000$, and the entropic regularization penalty is set to be $1/\lambda = 100$. We consider the OT distance as a diagnostic to quantify the statistical accuracy of the sampling scheme. 

We then optimize over the parameters $\epsilon$ for a fixed $\lambda$ using grid search. To be more precise, for the Langevin sampler with a fixed bandwidth kernel, we vary $\epsilon$, and compute the OT distance of the generated samples. The best performed $\epsilon$ is chosen to be the one that corresponds to the smallest OT distance, and we call it $\epsilon^*$. For the variable bandwidth kernel, we fix $\epsilon = \epsilon^*$. In order to 
disentangle the effect of varying $\epsilon$ and varying $\beta$ 
we use the normalized variable bandwidth as described in~\eqref{vb_K}. We set $\beta=-0.01\times 2^n$ with $n = \{0, \cdots, 8\}$ for $d = \{3, 4, 9\}$. The optimal bandwidth $\epsilon^*$ is reported in Table~\ref{table: hypersphere parameters}
and the comparisons between the fixed bandwidth kernel and the variable bandwidth kernel are presented in Figure~\ref{fig:err_plot}. We observe that by keeping $\epsilon$ fixed, the OT distance becomes smaller for a wide range of $\beta$.

We then examine the quality of generated samples at the optimal $\epsilon$ and $\beta$ along the last coordinate (the nonuniform direction). Similar to the previous study, we compute the one dimensional OT distance of the marginal distribution (see Figure~\ref{fig:err_plot} (right)) and show the histograms and the cumulative density function (CDF) of the samples generated using the fixed bandwidth kernel and using the variable bandwidth kernel in Figure~\ref{fig:cdf_plot}. The benefit of using the variable bandwidth kernel becomes more prominent when focusing on the marginal samples. In the case where we sample a 4-dimensional hyper-semisphere with non-uniformity along the last coordinate, we see in Figure~\ref{fig:cdf_plot} that the empirical CDF of the samples generated with the variable bandwidth kernel closely aligns with the reference (constructed using samples from the target distribution) for the most part. In contrast, the samples generated using the fixed variable bandwidth kernel noticeably diverge from the reference. This aligns with what we observed in Figure~\ref{fig.GaussRing_norho_vs_rho_xy} and Figure~\ref{fig.GaussRing_norho_vs_rho_angle_radius} --- the data generated using the variable bandwidth kernel better resemble the original data. In the cases where the samples are drawn from a 9-dimensional hyper-semisphere, while both methods struggle to generate samples that mirror those from the target distribution, primarily due to the inherent limitations of kernel methods in high-dimensional scenarios, employing a variable bandwidth kernel produces samples that exhibit a closer resemblance to those from the target distribution. 

\begin{figure}[htbp]
\centering
\includegraphics[width = 0.5\columnwidth]{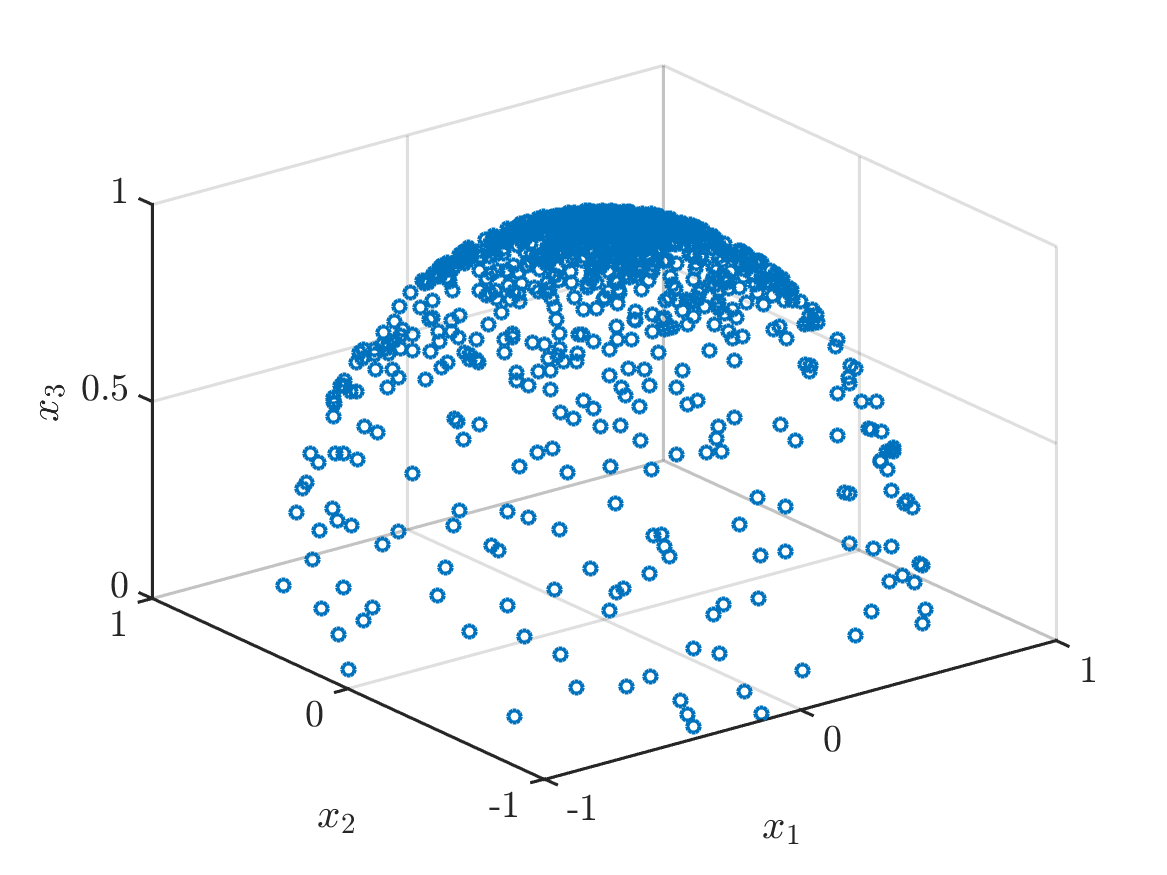}
\caption{Nonuniform samples $x^{(i)}$ on a $2$-dimensional semisphere. The non-uniformity is along the last coordinate $x_3$.}
\label{fig:samples_3d}
\end{figure}

\begin{center}
\begin{table}
\centering
\begin{tabular}{cc }  
\hline
$d$ & optimal bandwidth \\
\hline
3& $\epsilon^* =  0.008$\\
4& $\epsilon^* =  0.010$\\
9& $\epsilon^* =  0.050$\\
\hline
\end{tabular}
\caption{Optimal bandwidth parameters leading to minimal OT distance for different dimensions $d$, obtained using grid search.}
\label{table: hypersphere parameters}
\end{table}
\end{center}

\begin{figure}[htbp]
\centering
\includegraphics[width = 0.4\columnwidth]{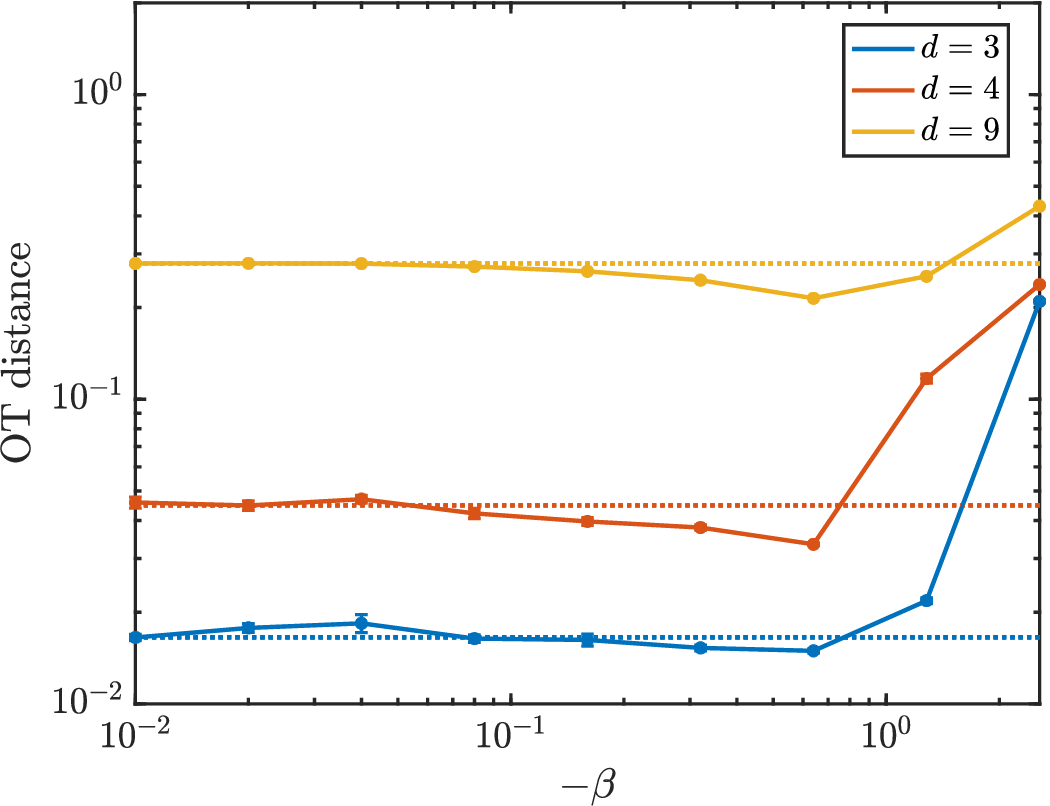} $\qquad$
\includegraphics[width = 0.4\columnwidth]{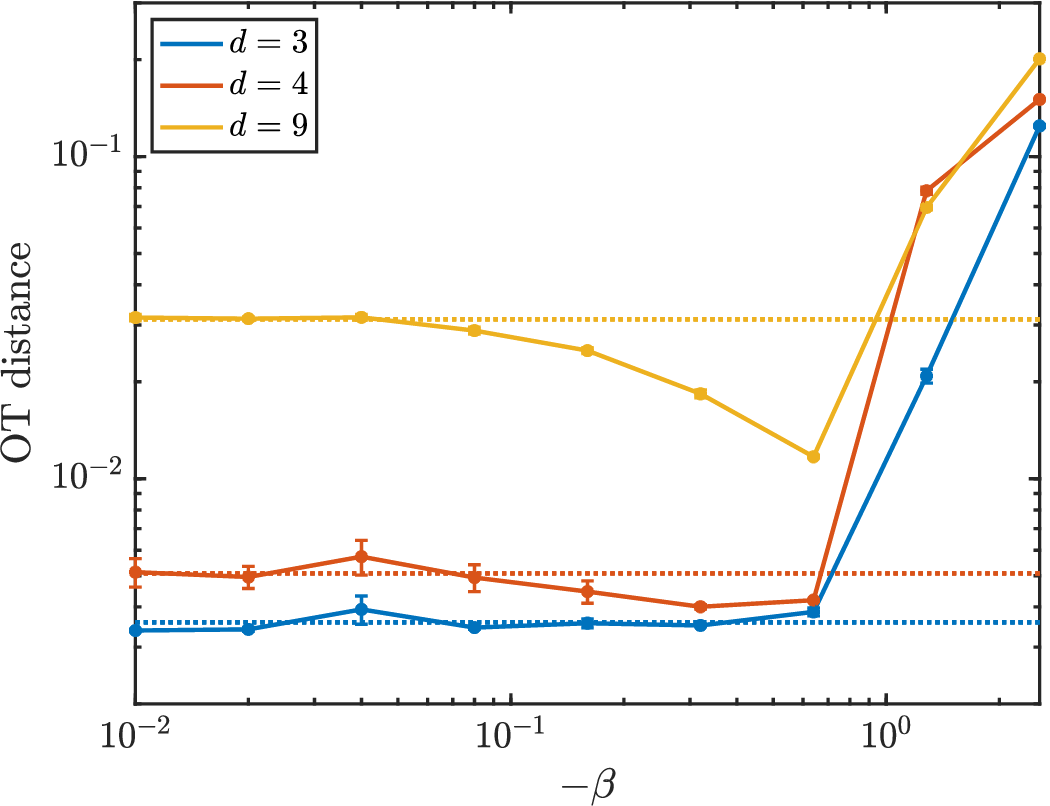}
 \caption{The straight dashed lines denote the OT distances of the samples using a fixed bandwidth kernel, and the solid lines denote the OT distances of the samples using a variable bandwidth kernel. Left:  comparison between the OT distance of samples generated using a fixed bandwidth kernel and  using a variable bandwidth kernel. Right: comparison between the \textit{one-dimensional} marginal OT distance of samples generated using a fixed bandwidth kernel and  using a variable bandwidth kernel, along the last coordinate (non-uniform direction).  Here the Langevin sampler is initialized at $(1,0,\cdots, 0)$ for all cases.}
\label{fig:err_plot}
\end{figure}

\begin{figure}[htbp]
\centering
\includegraphics[width = 0.32\columnwidth]{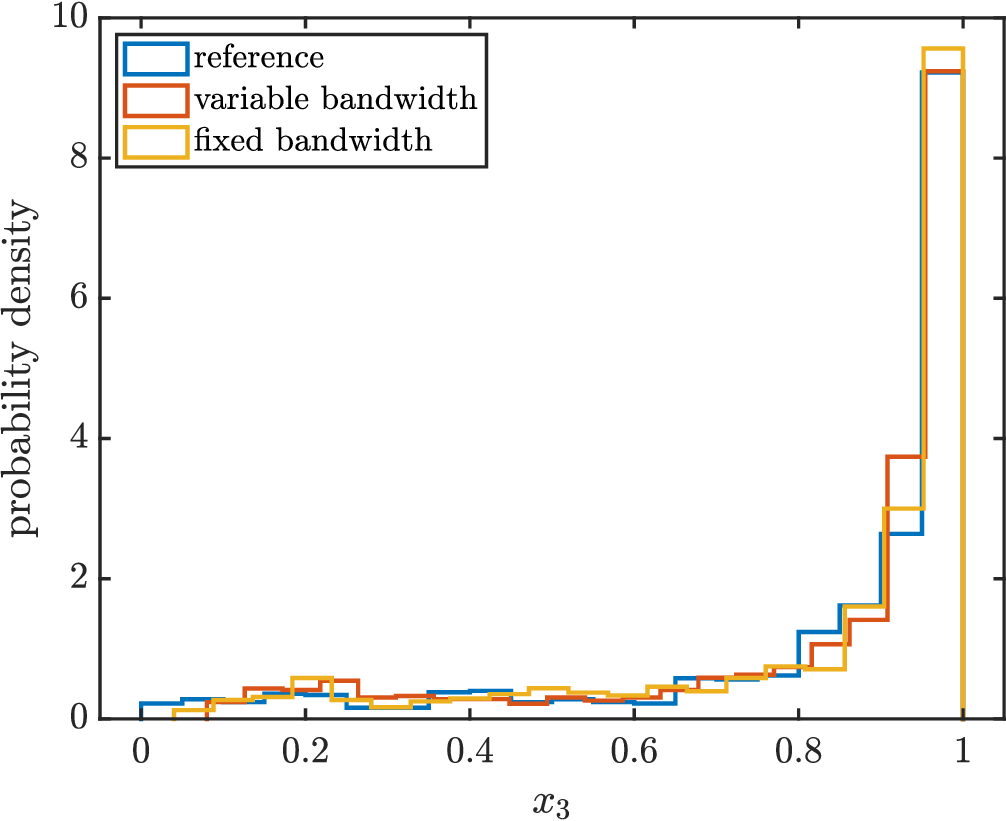}
\includegraphics[width = 0.32\columnwidth]{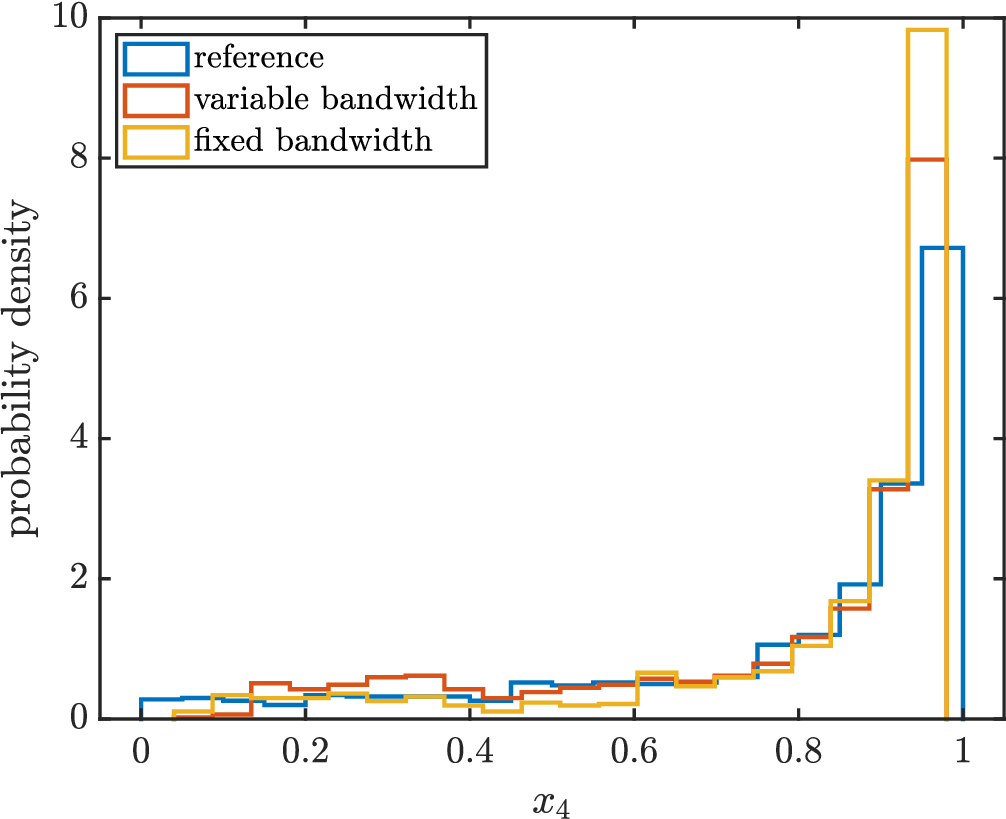}
\includegraphics[width = 0.32\columnwidth]{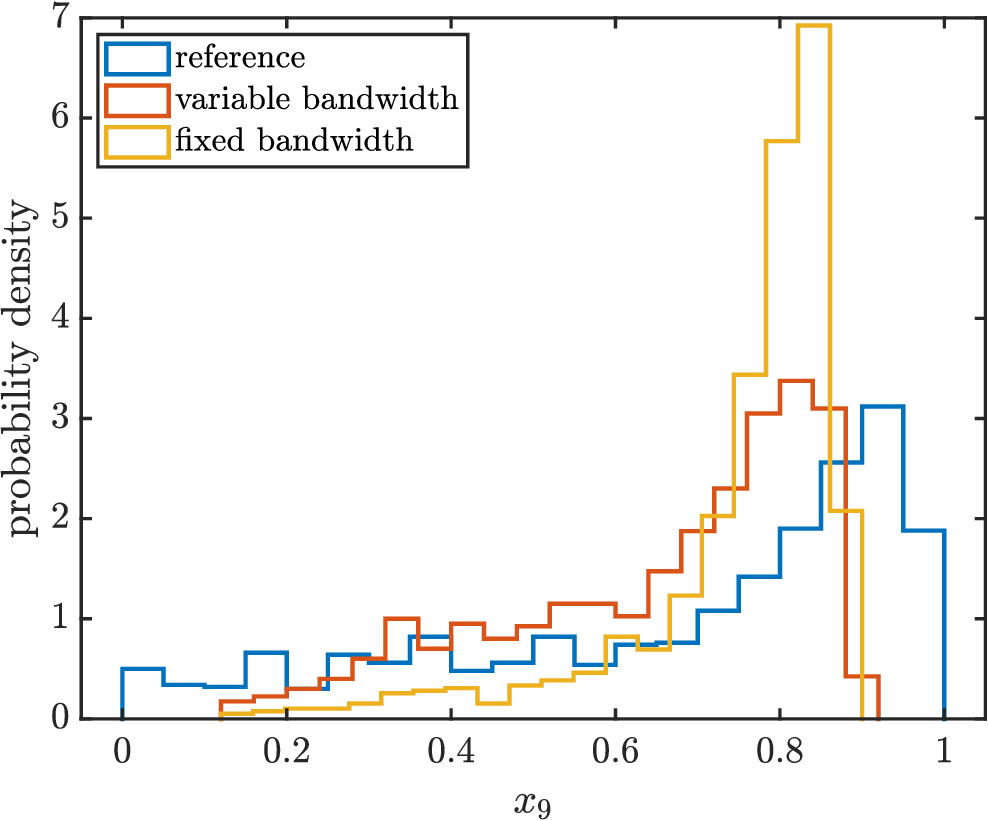}
\includegraphics[width = 0.32\columnwidth]{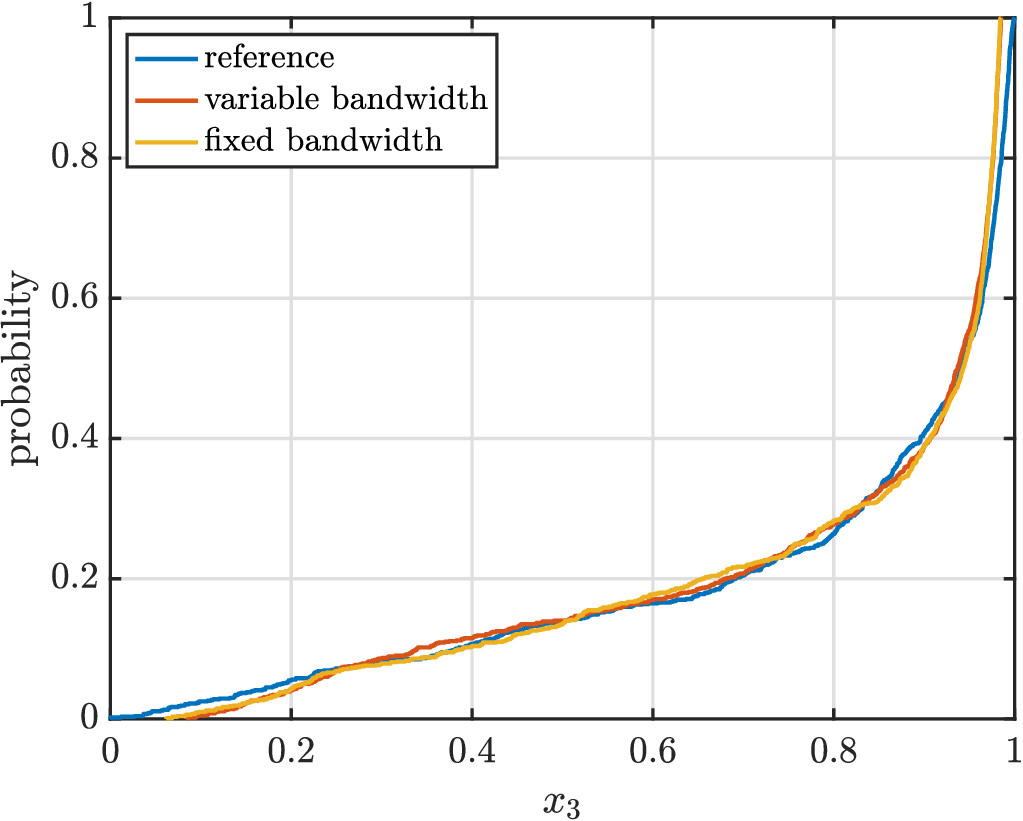}
\includegraphics[width = 0.32\columnwidth]{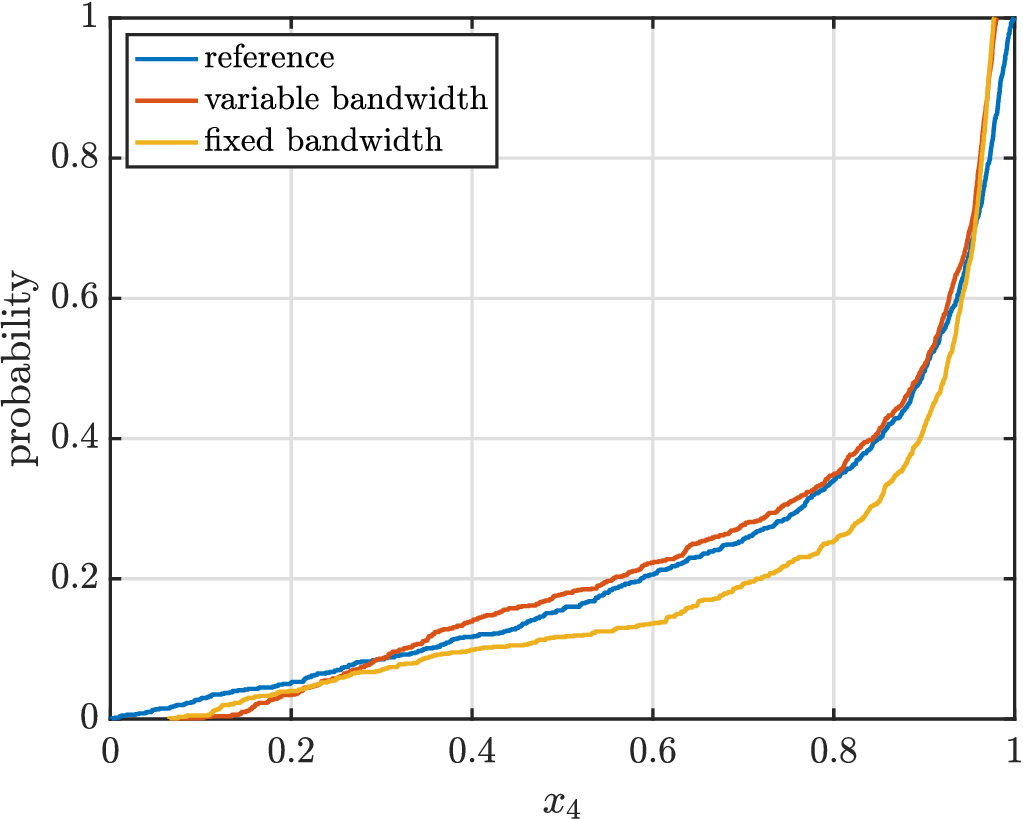}
\includegraphics[width = 0.32\columnwidth]{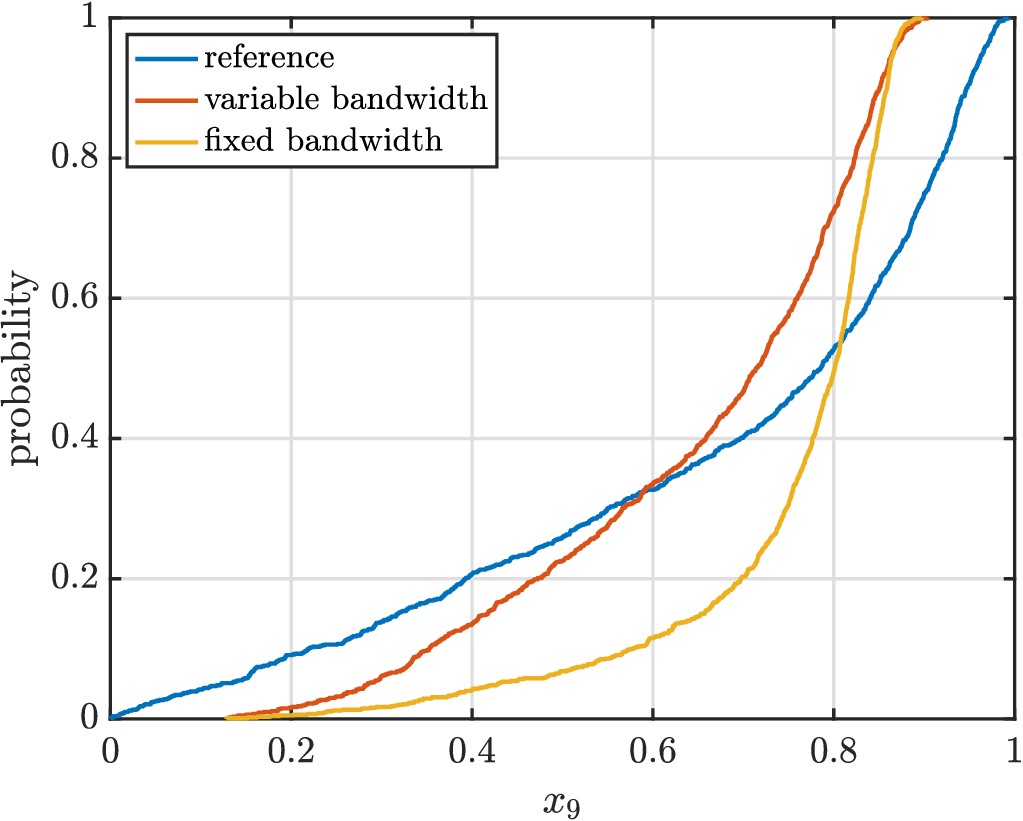}
\caption{Comparisons of empirical histograms (top row) and CDFs (bottom row) of the marginal distribution of the generated samples along the last coordinate. From left to right: the data are sampled from a $\{3,4,9\}$-dimensional (hyper-)semisphere.}
\label{fig:cdf_plot}
\end{figure}

%
\subsection{Stochastic subgrid-scale parametrization}
\label{s.stochpara}

The conditional sampling algorithm described in Section~\ref{sec:conditional} can be used to perform stochastic subgrid-scale parametrization, a central problem encountered in, for example, the climate sciences. The problem of subgrid-scale parametrization, or more generally of model closure, is the following: given a potentially stiff dynamical system
\begin{subequations}
\begin{align}
\dot z &= F_z(z) + g(z,y;\varepsilon)
\label{e.slow}
\\
\dot y &= F_y(z,y;\varepsilon),
\label{e.fast}
\end{align}
\end{subequations}
where $\varepsilon<1$ denotes the time scale separation between the slow resolved variables of interest $z\in \R^{d_s}$ and the unresolved fast degrees $y\in \R^{d_f}$. Note the notational difference between  the time scale separation parameter $\varepsilon$ and the bandwidth parameter $\epsilon$ used to define the diffusion map. For $\varepsilon \ll 1$ the system is stiff and to ensure numerical stability a small time step $\Delta t<\varepsilon$ is needed. This, together with the potential high-dimensionality of the fast subspace $d_f\gg d_s$, constitutes a computational barrier for simulating the dynamics \eqref{e.slow}--\eqref{e.fast} on the slow time scale of interest. Hence one is interested in obtaining an effective evolution equation for the slow resolved variables $z$ only which captures the essential effect of the unresolved variables $y$. Hence, we seek to determine the effective reduced dynamics
\begin{align*}
\dot z = F_z(z) + \psi(z).
\end{align*}
Here $F_z(z)$ denotes a deterministic drift which we assume to be known {\em{a priori}}, possibly based on physical reasoning. The term $\psi(z)$ denotes the unknown closure term to be learned which may be deterministic or stochastic, and which parametrizes the unknown unresolved fast processes. Deterministic machine learning methods have previously been used to learn $\psi(z)$ as the average effect of the unresolved variables, i.e. the average of $g(z,y;\varepsilon)$ over the (conditional) invariant measure of the fast process \cite{GottwaldReich21a,GottwaldReich21b}.  
Deterministic maps, however, are not able to capture the resolved dynamics with sufficient statistical accuracy, and it is by now well established that the effective equation is often of a stochastic nature \cite{MelbourneStuart11,GottwaldMelbourne13c,KellyMelbourne17,Gottwaldetal17}. 
In the case of infinite time scale separation there are explicit expressions for the effective drift and diffusion term of the effective slow dynamics. However, these terms include integrals over auto-correlation functions and are notoriously hard to estimate. 
Instead we propose to learn the closure term $\psi(z)$ and generate realisations $\psi(z)$ on the fly employing the conditional sampling algorithm described in Section~\ref{sec:conditional}. We consider the situation in which scientists have a good understanding of the resolved dynamics and know the slow vector field $F_z(z)$. Given data of the resolved variables $\{z^{(i)}\}_{i=0}^M$ sampled at equidistant times $\Delta t$, the associated closure term $\psi^{(i)}$ capturing the effect of the unresolved dynamics~\eqref{e.fast} can then be estimated as
\begin{align*}
\psi^{(i)} := z^{(i)} - z^{(i-1)} - F_z(z^{(i-1)}) \, \Delta t , 
\end{align*}
for $i=1,\cdots,M$. This produces the training samples $\{x^{(i)} = (z^{(i-1)},\psi^{(i)})\}_{i=1}^M$ which we collect in the $2d_s\times M$ data matrix $\mathcal{X}$.

The effective dynamics is then provided by the discrete stochastic surrogate model 
\begin{align}
z_{k} = z_{k-1} + F_z(z_{k-1}) \, \Delta t  + \psi_k
\label{e.surr}
\end{align}
for $k\ge 1$ and given $z_0$, where the subgrid-scale terms $\psi_k$ are generated as follows: Given the conditional mean $m(x;\epsilon)$ as described in Section~\ref{sec:diffusion approximation}, we perform the discrete Langevin sampler
\begin{subequations}
\label{eq:condsamp}
\begin{align}
\hat X_n &= (z_{k-1},P_2 X_n)\\
X_{n+1/2} &= \hat X_n + \sqrt{2}\,  \Xi_n\\
X_{n+1} &= m(X_{n+1/2};\epsilon),
\end{align}
\end{subequations}
for $n=1,\cdots,n_s$ with $n_s=100$ and $\Xi_n \sim {\rm N}(0,\epsilon I)$. Here $P_2:\mathbb{R}^{2d_s} \to \mathbb{R}^{d_s}$ is a projector with $P_2 x = \psi$. We finally set $\psi_k = P_2 X_{n_s}$. The assignment of the first component of $\hat X_n$ to $z_{k-1}$ ensures the conditioning of $\psi_k$ on $z_{k-1}$, and $n_s=100$ ensures that the generated samples $\psi_k$ are close to independent from $\psi_{k-1}$. We choose a fixed bandwidth with $\epsilon = 0.001$.

We consider here the particular example with $d_s=1$ and $d_f=3$ given by 
\begin{align}
\dot z &= z(1-z^2) + \frac{4}{90 \varepsilon}h(z) \, y_2, 
\label{e.L63_x}
\end{align}
where the fast dynamics is given by the Lorenz-63 system \cite{Lorenz63}
\begin{subequations}\label{e.L63_y}
\begin{align}
\varepsilon^2 \dot y_1 &= 10(y_2-y_1) \\
\varepsilon^2 \dot y_2 &= 28 y_1 - y_2 -y_1y_3 \\
\varepsilon^2 \dot y_3 &= -\frac{8}{3}y_3 + y_1y_2.
\end{align}
\end{subequations}
We look at the case of effective additive noise with $h(z)=1$ which does not require conditioning on the slow variable $z$, as well as at the case of multiplicative noise with $h(z)=z$. We used {\sc{MATLAB}}'s built-in {\rm{ode45}} routine \cite{MATLAB2022b} to generate the time series with a time-scale separation parameter of $\varepsilon=0.01$. The time series is subsequently sub-sampled with $\Delta t=0.1$. Figure~\ref{fig.L63_additive} shows a comparison between the full multi-scale system~\eqref{e.L63_x}--\eqref{e.L63_y} with additive noise $h(z)=1$ and the stochastic surrogate model~\eqref{e.surr}. The slow $z$-dynamics exhibits stochastic bimodal dynamics. Figure~\ref{fig.L63_additive2} shows a comparison for the multiplicative case $h(z)=z$ which yields unimodal slow dynamics. It is clearly seen that the surrogate model~\eqref{e.surr} obtained by the generative conditional sampler generates statistically reliable dynamics. 

We remark that the above proposed method to draw random samples from the closure term does not require the knowledge of the time-scale separation parameter $\varepsilon$. The closure term $\psi$ only depends on the time series $\{z^{(i)}\}_{i=0}^M$ of the resolved variables and the knowledge of the slow vector field $F_z$ (which may be zero). 

In the next subsection we show that one can indeed draw full sample trajectories of a dynamical system even without any time-scale separation using the conditional Schr\"odinger bridge sampler.

%
\begin{figure}[htbp]
\centering
\includegraphics[width = 0.32\columnwidth]{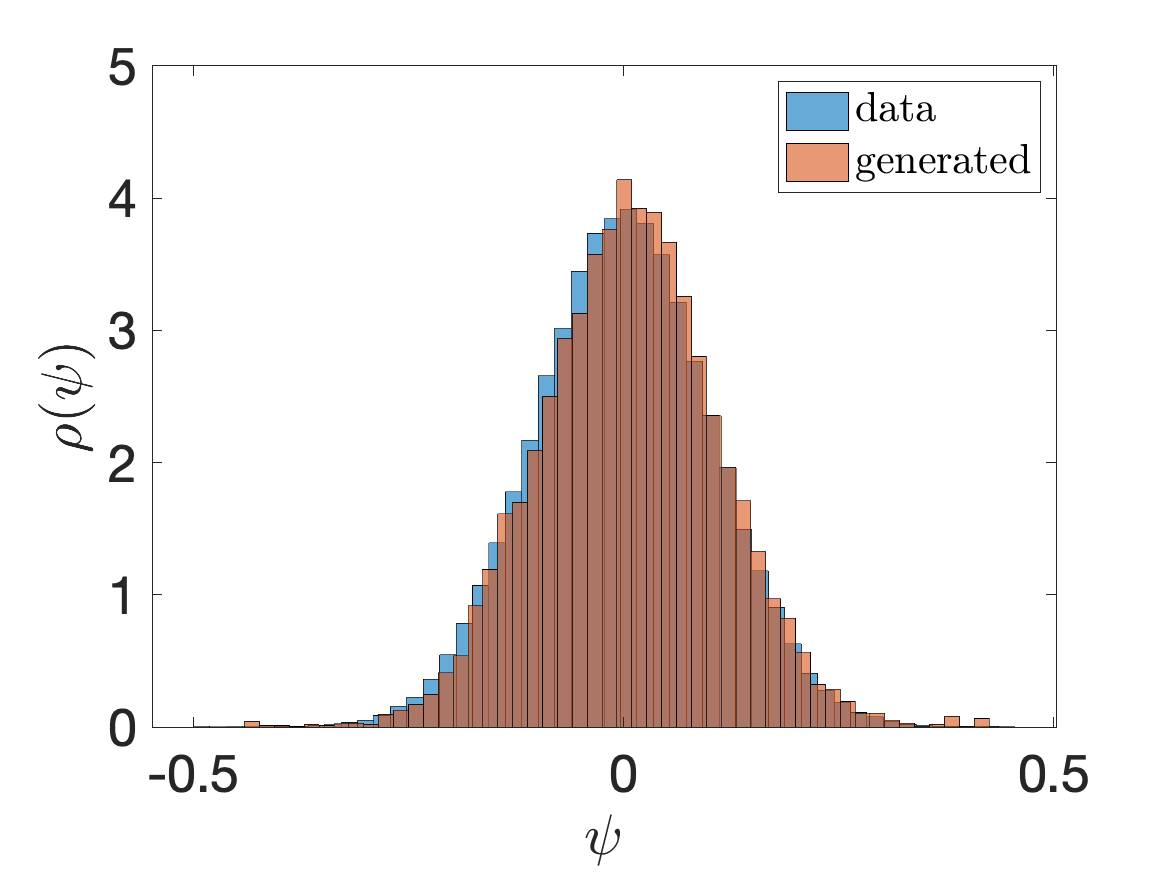}
\includegraphics[width = 0.32\columnwidth]{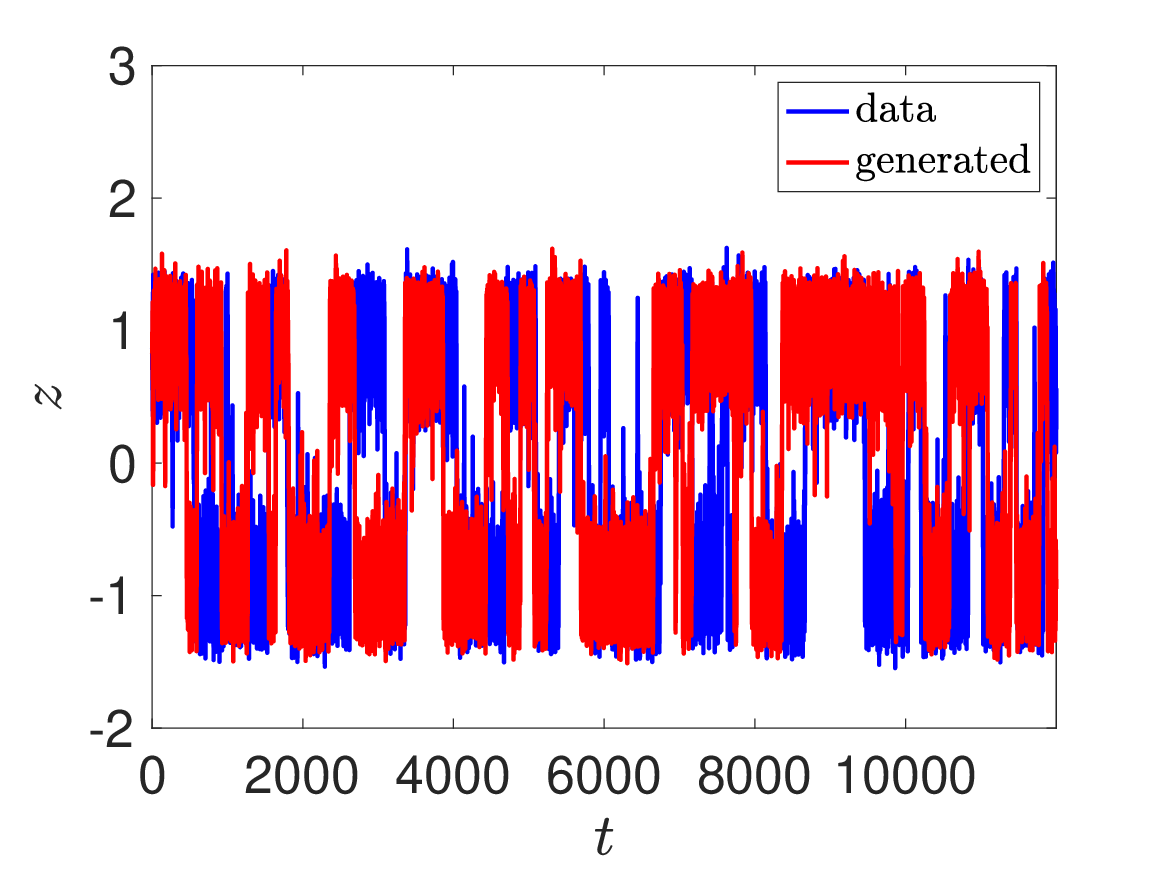}
\includegraphics[width = 0.32\columnwidth]{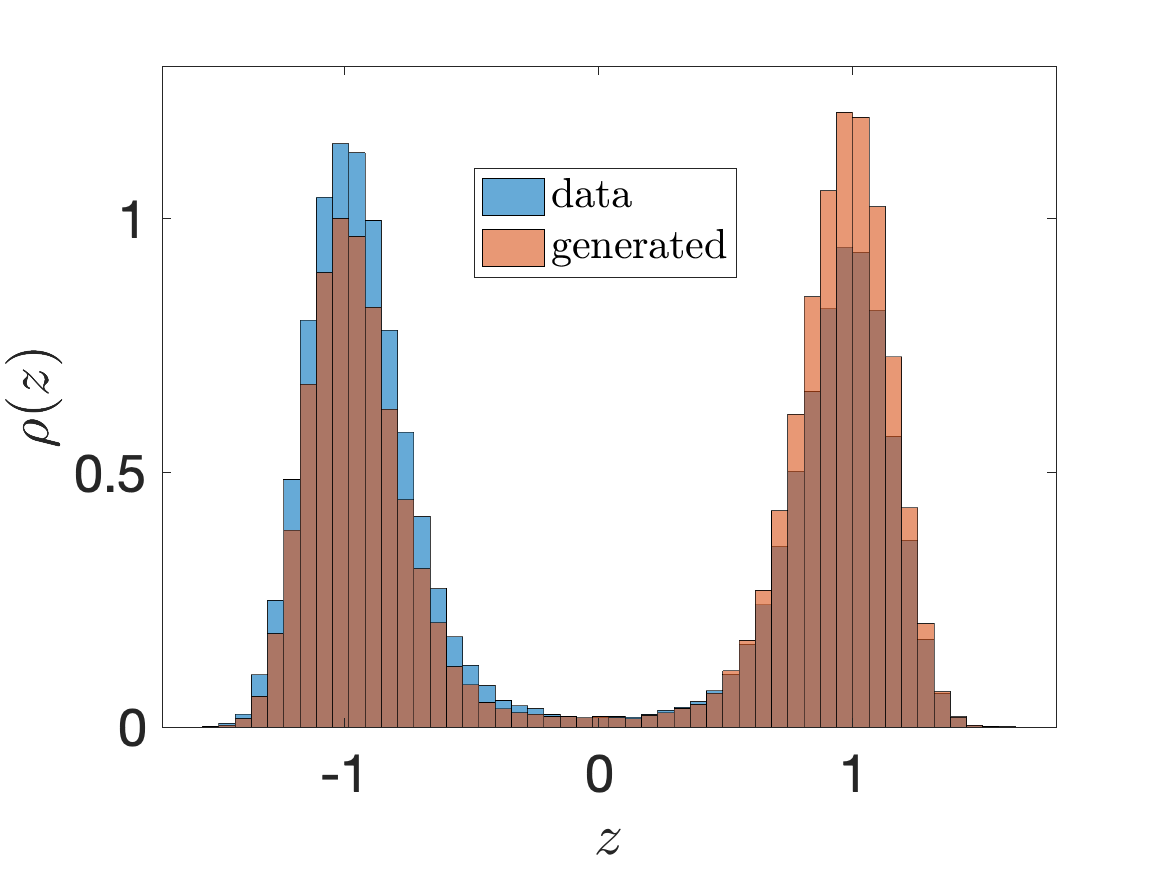}
\caption{Results for the stochastic subgrid-scale parametrization for the multi-scale system~\eqref{e.L63_x}--\eqref{e.L63_y} with $\varepsilon=0.01$ and with additive noise $h(z)=1$. Shown are results obtained by integrating the full multi-scale system and by the stochastic subgrid-scale parametrization scheme using our generative sampler \eqref{eq:condsamp} trained with $M=120,000$. Left: Empirical histograms of the closure term $\psi=\psi(z)$. Middle: Time series of the slow variable $z(t)$. Right: Empirical histograms of the slow variable $z$.}
\label{fig.L63_additive}
\end{figure}
%

%
\begin{figure}[htbp]
\centering
\includegraphics[width = 0.32\columnwidth]{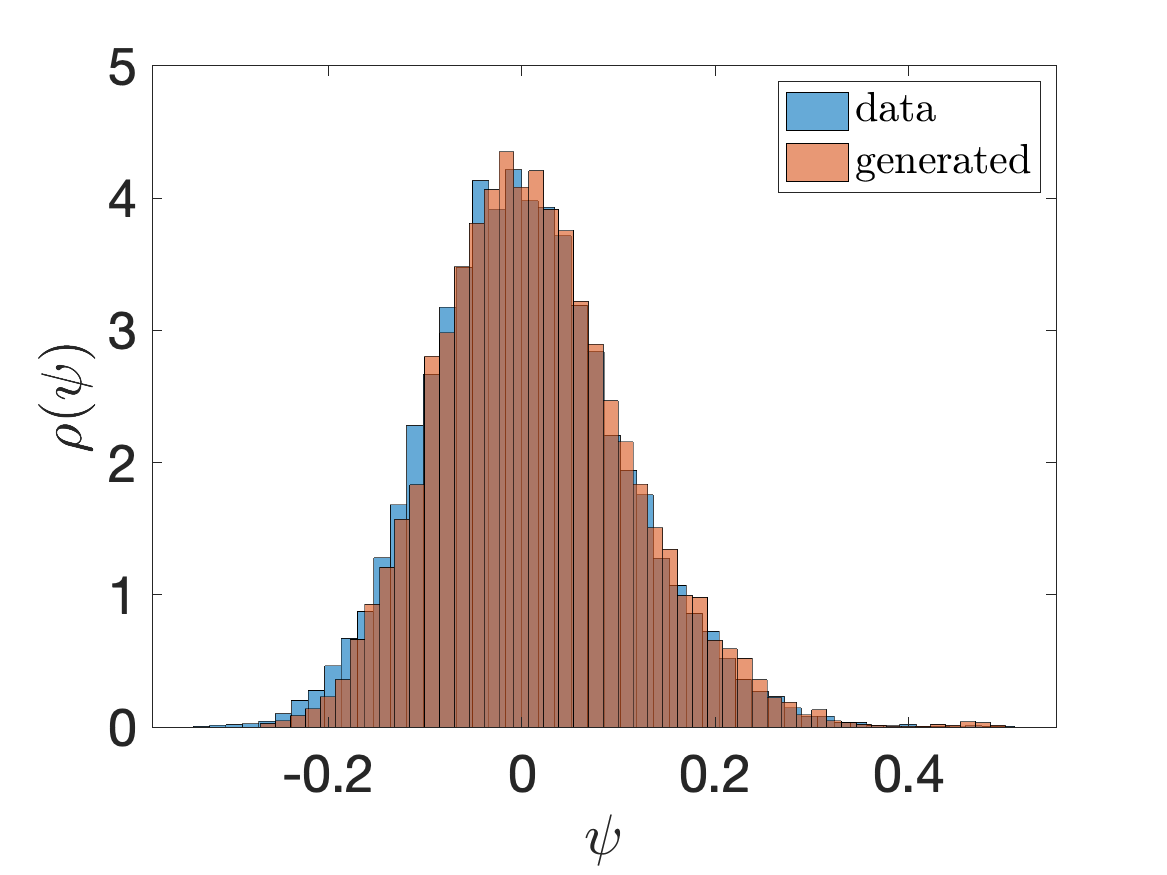}
\includegraphics[width = 0.32\columnwidth]{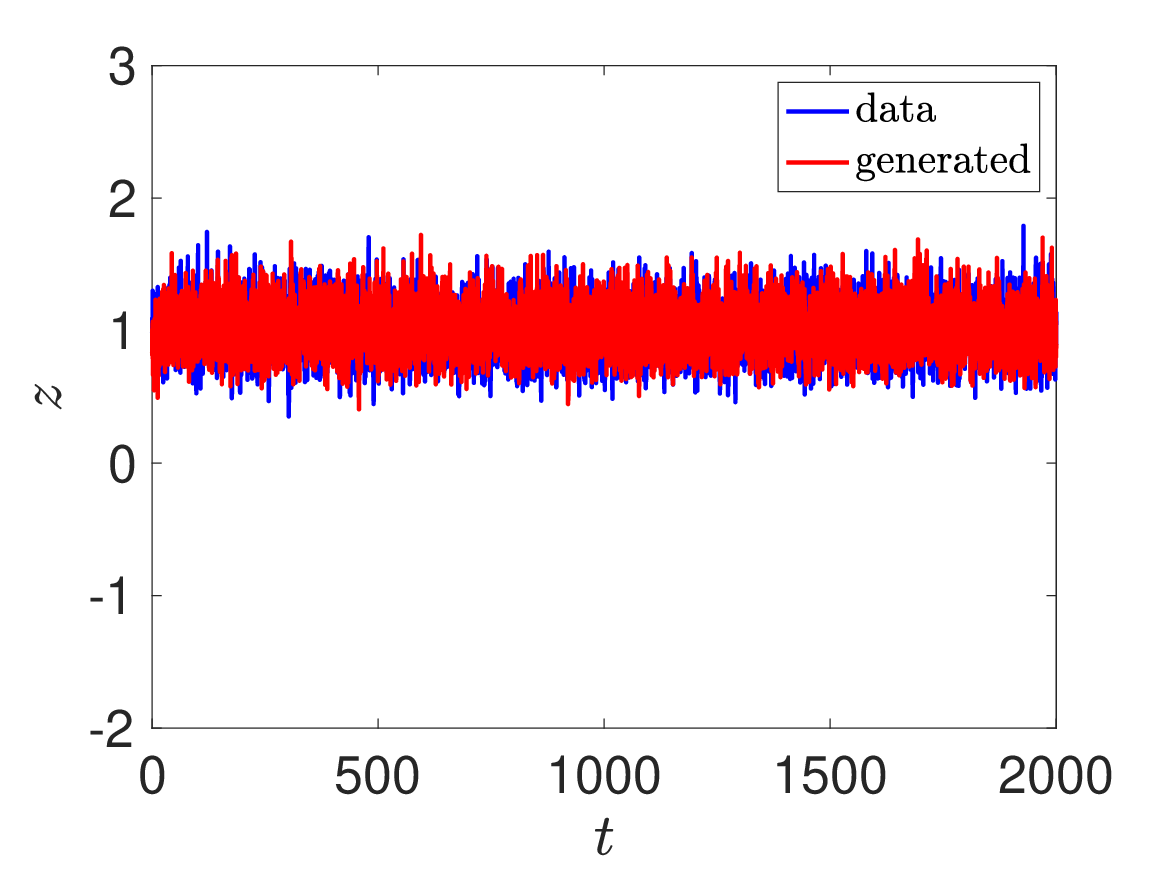}
\includegraphics[width = 0.32\columnwidth]{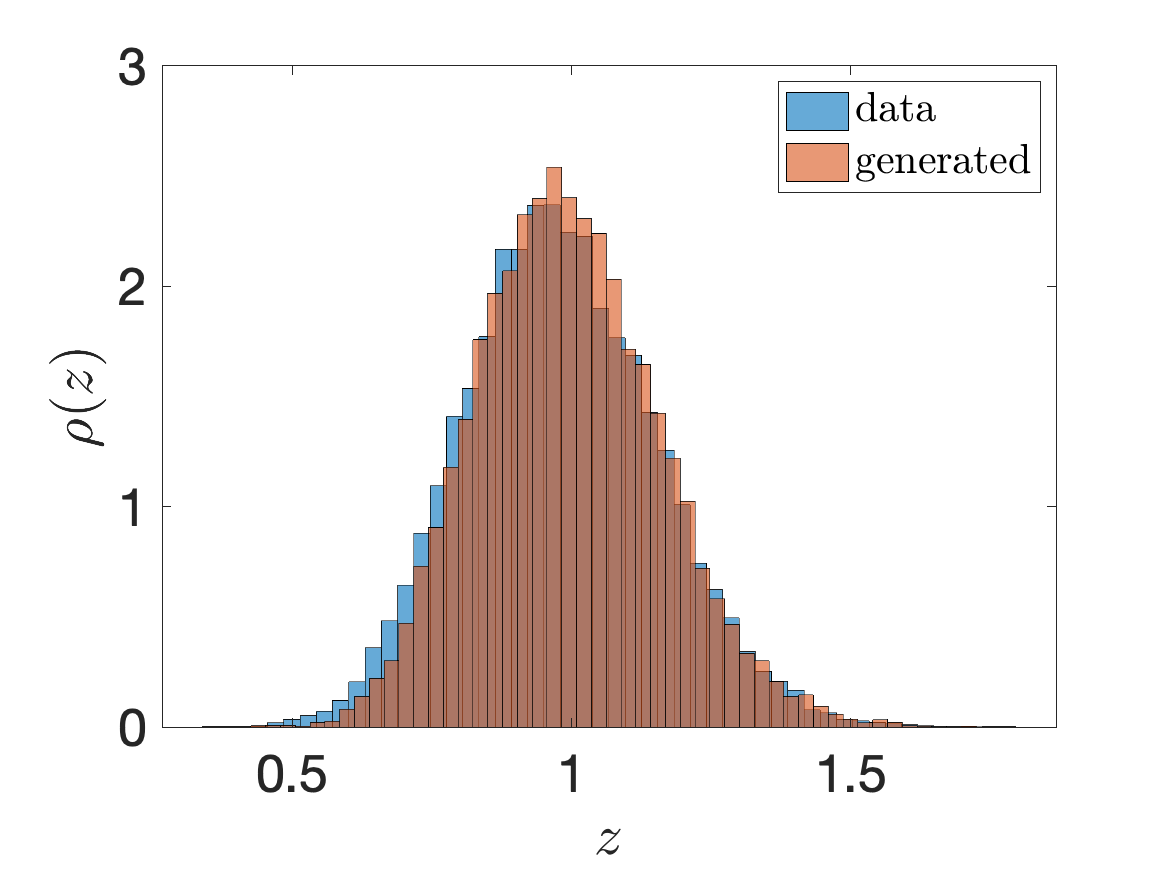}
\caption{Results for the stochastic subgrid-scale parametrization for the multi-scale system~\eqref{e.L63_x}--\eqref{e.L63_y} with $\varepsilon=0.01$ and  with multiplicative noise $h(z)=z$. Shown are results obtained by integrating the full multi-scale system and by the stochastic subgrid-scale parametrization scheme using our generative sampler \eqref{eq:condsamp}, trained with $M=20,000$. Left: Empirical histograms of the closure term $\psi=\psi(z)$. Middle: Time series of the slow variable $z(t)$. Right: Empirical histograms of the slow variable $z$.}
\label{fig.L63_additive2}
\end{figure}

%
\subsection{Generative modelling of dynamical systems}
\label{sec.L63}
We now employ the Schr\"odinger bridge sampler to perform sequential conditional sampling and generate typical trajectories from a dynamical system. We consider the Lorenz-63 system \cite{Lorenz63} for $y=(y_1,y_2,y_3)$ 
given by (\ref{e.L63_y}) with $\varepsilon = 1$. Given a long time trajectory $\{y^{(i)}\}_{i=0}^M$ with equidistantly sampled data points $y^{(i)}=y(t_i)$ with $t_{i+1}-t_i=\Delta t$ for all $i$, we construct the Schr\"odinger bridge as a coupling for the joint probability function $\pi(y(t),y(t+\Delta t))$ from the $M$ training pairs $x^{(i)}=(y^{(i-1)},y^{(i)})\in \mathbb{R}^6$. Our aim is to generate new trajectories $\{y_k\}_{k\ge 0}$ in time intervals of $\Delta t$ for given $y_0 = y(t_0)$ and $y_1=y(t_1)$ sequentially, similarly to the generative modelling of texts: Given $x_{k-1} = (y_{k-1},y_{k})$, generate a new $x_{k} = (y_{k},y_{k+1})$, which is not part of the initial training samples. Obviously the first component of $x_{k}$ needs to be conditioned on the second component of $x_{k-1}$. As in Section~\ref{s.stochpara}, to decorrelate we perform $n_s=20$ Langevin sampling steps 
\begin{subequations}
\label{eq.condsamp}
\begin{align}
\hat X_{n} &= (y_k,P_2 X_{n})\\
X_{n+1/2} &= \hat X_n + \sqrt{2}\,  \Xi_n\\
X_{n+1} &= m(X_{n+1/2};\epsilon),
\end{align}
\end{subequations}
for given $y_k$ with $\Xi_n \sim {\rm N}(0,\epsilon I)$. Here $P_2:\mathbb{R}^6 \to \mathbb{R}^3$ denotes the project onto the second three components of $x \in \mathbb{R}^6$. We finally set $y_{k+1} = P_2 X_{n_s}$.

In Figure~\ref{fig.L63_gen} we show that the conditional sampling produces trajectories which resemble those of the actual Lorenz-63 system, as well as having the same asymptotic statistical behaviour as seen by the reconstruction of the famous butterfly attractor. We choose a fixed bandwidth with $\epsilon = 0.05$, and used $10,000$ training data $x(t_n)$ with $\Delta t = 0.1$.

The proposed approach can be compared to a direct approximation of the time-$\Delta t$-propagator
\begin{equation}
y_{n+1} = \Psi(y_n)
\end{equation}
via, for example, a random feature map approximation as proposed in \cite{GottwaldReich21a}. More specifically, using a smaller observation interval of $\Delta t = 0.02$ and noisy data, highly accurate predictions were achieved in \cite{GottwaldReich21a} using $D_r = 300$ randomly chosen feature maps. The output weights $W \in \mathbb{R}^{3\times D_r}$ were learned using the ensemble Kalman filter \cite{reichcotter15}. In comparison, the proposed Schr\"odinger bridge sampler requires a larger training set ($M=10,000$ versus $M = 4,000$ in \cite{GottwaldReich21a}), but involves only a single tuning parameter $\epsilon$ and repeated computation of (\ref{eq:mean}).

\begin{figure}[htbp]
\centering
\includegraphics[width = 0.48\columnwidth]{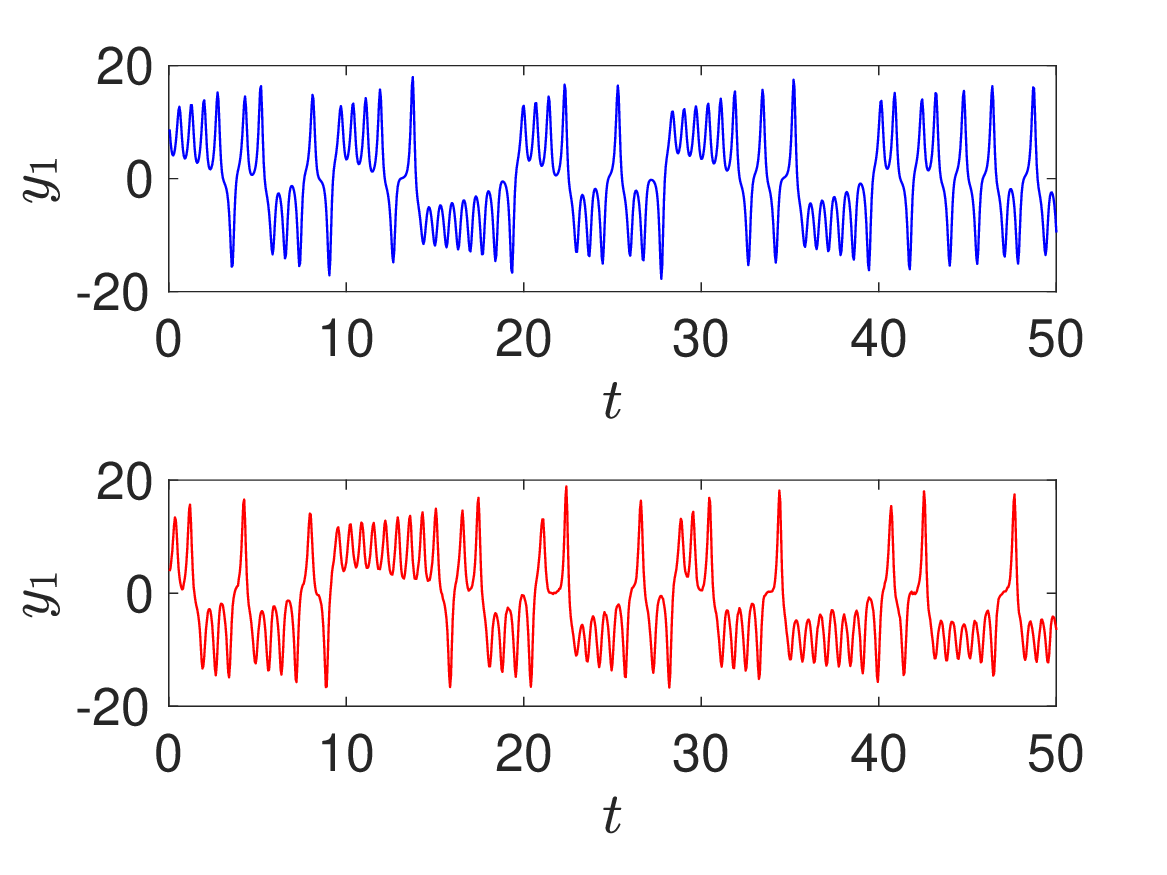}
\includegraphics[width = 0.48\columnwidth]{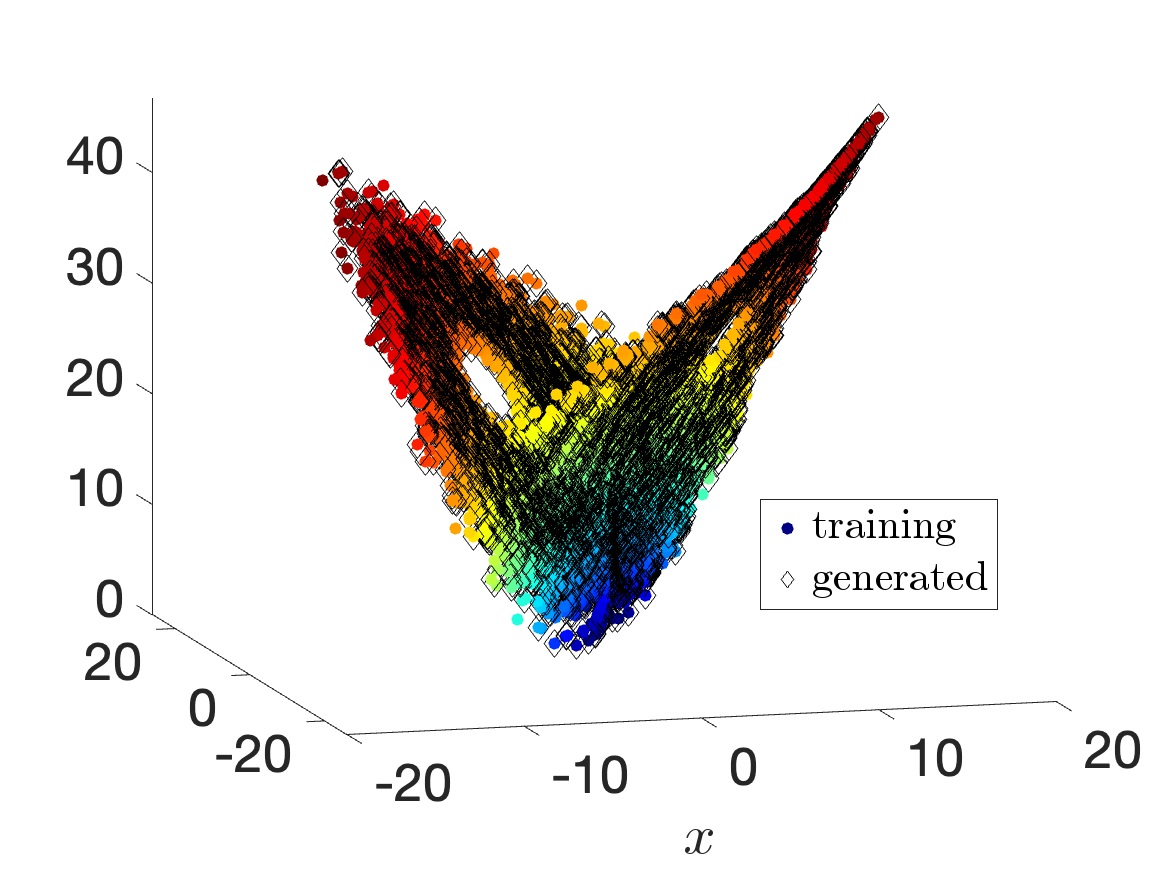}
\caption{Comparison of generated data from the Schr\"odinger bridge sampler with the original dynamics of the Lorenz-63 system \eqref{e.L63_y}. Left: Typical trajectories of $y_1$ obtained from a simulation of \eqref{e.L63_y} (top) and the generated data using conditional sampling \eqref{eq.condsamp} (bottom). Right: The corresponding attractor for the simulated and generated data.}
\label{fig.L63_gen}
\end{figure}

%
\section{Summary and outlook}
\label{sec:conclusion}

We have introduced a Schr\"odinger bridge based Langevin scheme for generative modeling. Our method combines sample-based Markov chain approximations with discrete-time Langevin-type sampling, yielding a non-parametric generative model that is unconditionally stable and geometrically ergodic. Even though the approach is entirely data-driven, the generative model possesses a single tuning parameter, which is the choice of the step-size $\epsilon>0$. Practical choices of $\epsilon$ depend both on the number of training samples, $M$, and the properties of the target distribution $\pi(x)$ on $\mathbb{R}^d$. We showed numerically that employing a variable bandwidth kernel, in contrast to a fixed bandwidth kernel, results in generated samples with enhanced accuracy. However, employing variable a bandthwith kernel comes with an added layer of complexity and requires delicate tuning.

In terms of practical applications, the performance of the conditional generative model was showcased through its applications to a stochastic subgrid-scale parametrization problem and the generation of trajectories of the chaotic Lorenz-63 system. In both cases, the conditional sampler relied on the availability of a single simulated trajectory, which served as the training samples. A constant bandwith kernel has been employed in all these examples.

We considered in this paper only equally weighted training samples $x^{(i)} \sim \pi(x)$. If there is a change of measure due to, for example, observed data with a given likelihood, then the measure and the couplings can be appropriately modified to take into account the non-uniform measure, i.e, weighted training samples.

Furthermore, all computational examples considered in this paper were low-dimensional, which is intrinsic to Schr\"odinger bridge approximations of the semi-group $\exp (\epsilon \mathcal{L})$ \cite{WR20}. This could be seen as a major disadvantage in comparison to the sophisticated deep neural network architectures typically used in high-dimensional SGMs \cite{diffusion1,diffusion2}. A first step to close this gap has been taken in \cite{GR24}, where a localized Schr\"odinger bridge sampler has been proposed and implemented for high-dimensional generative sampling problems. Interestingly, and worth further investigation, the localized Schr\"odinger bridge sampler can be viewed as a variant of multi-head self attention \cite{Transformer,SABP22}.

Additional future research will delve into the theoretical foundations of the proposed scheme, including its convergence rate and scalability extending results from \cite{WR20}. Finally, the proposed methodology can be extended to score generative modeling by replacing all required score functions by Schr\"odinger bridge-based approximations and to interacting particle approximations of the Fokker--Planck equation, where the score function represents now diffusion \cite{MRO20}.

%
\smallskip
\smallskip
\smallskip

\paragraph{Acknowledgements.}
This work has been funded by Deutsche Forschungsgemeinschaft (DFG) - Project-ID 318763901 - SFB1294. GAG acknowledges funding from the Australian Research Council, grant DP220100931. FL and YMM acknowledge support from the US Department of Energy, Office of Advanced Scientific Computing Research, award DE-SC0023188. We thank Ricardo Baptista for pointing out the relationship to the barycentric projection.


\bibliographystyle{abbrvnat}
\bibliography{sample.bib}

\begin{thebibliography}{59}
\providecommand{\natexlab}[1]{#1}
\providecommand{\url}[1]{\texttt{#1}}
\expandafter\ifx\csname urlstyle\endcsname\relax
  \providecommand{\doi}[1]{doi: #1}\else
  \providecommand{\doi}{doi: \begingroup \urlstyle{rm}\Url}\fi

\bibitem[Berry and Harlim(2016)]{DM_variable_bw}
T.~Berry and J.~Harlim.
\newblock Variable bandwidth diffusion kernels.
\newblock \emph{Applied and Computational Harmonic Analysis}, 40:\penalty0
  68--96, 2016.

\bibitem[Bian and Xie(2021)]{drug2}
Y.~Bian and X.-Q. Xie.
\newblock Generative chemistry: drug discovery with deep learning generative
  models.
\newblock \emph{Journal of Molecular Modeling}, 27:\penalty0 1--18, 2021.

\bibitem[Bilodeau et~al.(2022)Bilodeau, Jin, Jaakkola, Barzilay, and
  Jensen]{drug1}
C.~Bilodeau, W.~Jin, T.~Jaakkola, R.~Barzilay, and K.~F. Jensen.
\newblock Generative models for molecular discovery: Recent advances and
  challenges.
\newblock \emph{WIREs Computational Molecular Science}, 12:\penalty0 e1608,
  2022.

\bibitem[Block et~al.(2020)Block, Mroueh, and Rakhlin]{Block2020GenerativeMW}
A.~Block, Y.~Mroueh, and A.~Rakhlin.
\newblock Generative modeling with denoising auto-encoders and {L}angevin
  sampling.
\newblock Technical report, arXiv:2002.00107, 2020.

\bibitem[Cai et~al.(2020)Cai, Yang, Averbuch{-}Elor, Hao, Belongie, Snavely,
  and Hariharan]{cv1}
R.~Cai, G.~Yang, H.~Averbuch{-}Elor, Z.~Hao, S.~J. Belongie, N.~Snavely, and
  B.~Hariharan.
\newblock Learning gradient fields for shape generation.
\newblock In A.~Vedaldi, H.~Bischof, T.~Brox, and J.~Frahm, editors,
  \emph{Computer Vision - {ECCV} 2020 - 16th European Conference, Glasgow, UK,
  August 23-28, 2020, Proceedings, Part {III}}, volume 12348 of \emph{Lecture
  Notes in Computer Science}, pages 364--381. Springer, 2020.

\bibitem[Chen et~al.(2022)Chen, Liu, and Theodorou]{ChenEtAl22}
T.~Chen, G.-H. Liu, and E.~Theodorou.
\newblock Likelihood training of {S}chr\"odinger bridge using forward-backward
  {SDE}s theory.
\newblock In \emph{International Conference on Learning Representations}, 2022.
\newblock URL \url{https://openreview.net/forum?id=nioAdKCEdXB}.

\bibitem[Chen et~al.(2021)Chen, Georgiou, and Pavon]{CGP21}
Y.~Chen, T.~T. Georgiou, and M.~Pavon.
\newblock Stochastic control liaisons: {R}ichard {S}inkhorn meets {G}aspard
  {M}onge on a {S}chr\"odinger bridge.
\newblock \emph{SIAM Review}, 63\penalty0 (2):\penalty0 249--313, 2021.

\bibitem[Coifman and Lafon(2006)]{diffusion_map2}
R.~R. Coifman and S.~Lafon.
\newblock Diffusion maps.
\newblock \emph{Applied and Computational Harmonic Analysis}, 21:\penalty0
  5--30, 2006.

\bibitem[Coifman et~al.(2005)Coifman, Lafon, Lee, Maggioni, Nadler, Warner, and
  Zucker]{diffusion_map1}
R.~R. Coifman, S.~Lafon, A.~B. Lee, M.~Maggioni, B.~Nadler, F.~Warner, and
  S.~W. Zucker.
\newblock Geometric diffusions as a tool for harmonic analysis and structure
  definition of data: Diffusion maps.
\newblock \emph{Proceedings of the National Academy of Sciences}, 102\penalty0
  (21):\penalty0 7426--7431, 2005.

\bibitem[Cuturi(2013)]{sinkhorn}
M.~Cuturi.
\newblock Sinkhorn distances: Lightspeed computation of optimal transport.
\newblock In C.~Burges, L.~Bottou, M.~Welling, Z.~Ghahramani, and
  K.~Weinberger, editors, \emph{Advances in Neural Information Processing
  Systems}, volume~26. Curran Associates, Inc., 2013.

\bibitem[De~Bortoli et~al.(2021)De~Bortoli, Thornton, Heng, and
  Doucet]{DeBortoliEtAl21}
V.~De~Bortoli, J.~Thornton, J.~Heng, and A.~Doucet.
\newblock Diffusion {S}chr\"{o}dinger bridge with applications to score-based
  generative modeling.
\newblock In M.~Ranzato, A.~Beygelzimer, Y.~Dauphin, P.~Liang, and J.~W.
  Vaughan, editors, \emph{Advances in Neural Information Processing Systems},
  volume~34, pages 17695--17709. Curran Associates, Inc., 2021.

\bibitem[Deecke et~al.(2019)Deecke, Vandermeulen, Ruff, Mandt, and
  Kloft]{anomaly1}
L.~Deecke, R.~Vandermeulen, L.~Ruff, S.~Mandt, and M.~Kloft.
\newblock Image anomaly detection with generative adversarial networks.
\newblock In \emph{Machine Learning and Knowledge Discovery in Databases:
  European Conference, ECML PKDD 2018, Dublin, Ireland, September 10--14, 2018,
  Proceedings, Part I 18}, pages 3--17. Springer, 2019.

\bibitem[Ermak and Mc{C}ammon(1978)]{ErmakMcCammon}
D.~L. Ermak and J.~Mc{C}ammon.
\newblock Brownian dynamics with hydrodynamic interactions.
\newblock \emph{J. Chem. Phys.}, 69:\penalty0 1352--1360, 1978.

\bibitem[Fefferman et~al.(2016)Fefferman, Mitter, and
  Narayanan]{FeffermanEtAl16}
C.~Fefferman, S.~Mitter, and H.~Narayanan.
\newblock Testing the manifold hypothesis.
\newblock \emph{J. Amer. Math. Soc.}, 29\penalty0 (4):\penalty0 983--1049,
  2016.
\newblock \doi{10.1090/jams/852}.
\newblock URL \url{https://doi.org/10.1090/jams/852}.

\bibitem[Fixman(1978)]{Fixman78}
M.~Fixman.
\newblock Simulation of polymer dynamics. {I}. {G}eneral theory.
\newblock \emph{J. Chem. Phys.}, 69:\penalty0 1527--1537, 1978.

\bibitem[Garbuno-Inigo et~al.(2020)Garbuno-Inigo, Hoffmann, Li, and
  Stuart]{Garbuno-Inigo}
A.~Garbuno-Inigo, F.~Hoffmann, W.~Li, and A.~M. Stuart.
\newblock Interacting {L}angevin diffusions: {G}radient structure and ensemble
  {K}alman sampler.
\newblock \emph{SIAM Journal on Applied Dynamical Systems}, 19:\penalty0
  412--441, 2020.

\bibitem[Girolami and Calderhead(2011)]{RMLD1}
M.~Girolami and B.~Calderhead.
\newblock Riemann manifold {L}angevin and {H}amiltonian {M}onte {C}arlo
  methods.
\newblock \emph{Journal of the Royal Statistical Society: Series B (Statistical
  Methodology)}, 73:\penalty0 123--214, 2011.

\bibitem[Gottwald et~al.(2017)Gottwald, Crommelin, and Franzke]{Gottwaldetal17}
G.~Gottwald, D.~Crommelin, and C.~Franzke.
\newblock {Stochastic climate theory}.
\newblock In C.~L.~E. Franzke and T.~J. O'Kane, editors, \emph{Nonlinear and
  Stochastic Climate Dynamics}, pages 209--240. Cambridge University Press,
  Cambridge, 2017.

\bibitem[Gottwald and Melbourne(2013)]{GottwaldMelbourne13c}
G.~A. Gottwald and I.~Melbourne.
\newblock Homogenization for deterministic maps and multiplicative noise.
\newblock \emph{Proceedings of the Royal Society A: Mathematical, Physical and
  Engineering Science}, 469\penalty0 (2156), 2013.

\bibitem[Gottwald and Reich(2021{\natexlab{a}})]{GottwaldReich21a}
G.~A. Gottwald and S.~Reich.
\newblock Supervised learning from noisy observations: {C}ombining
  machine-learning techniques with data assimilation.
\newblock \emph{Physica D: Nonlinear Phenomena}, 423:\penalty0 132911,
  2021{\natexlab{a}}.

\bibitem[Gottwald and Reich(2021{\natexlab{b}})]{GottwaldReich21b}
G.~A. Gottwald and S.~Reich.
\newblock {Combining machine learning and data assimilation to forecast
  dynamical systems from noisy partial observations}.
\newblock \emph{Chaos: An Interdisciplinary Journal of Nonlinear Science},
  31:\penalty0 101103, 2021{\natexlab{b}}.

\bibitem[Gottwald and Reich(2024)]{GR24}
G.~A. Gottwald and S.~Reich.
\newblock Localized schr\"odinger bridge sampler.
\newblock Technical report, arXiv:2409.07968, 2024.

\bibitem[Ho et~al.(2020)Ho, Jain, and Abbeel]{diffusion1}
J.~Ho, A.~Jain, and P.~Abbeel.
\newblock Denoising diffusion probabilistic models.
\newblock In H.~Larochelle, M.~Ranzato, R.~Hadsell, M.~Balcan, and H.~Lin,
  editors, \emph{Advances in Neural Information Processing Systems}, volume~33,
  pages 6840--6851. Curran Associates, Inc., 2020.

\bibitem[Ho et~al.(2021)Ho, Saharia, Chan, Fleet, Norouzi, and Salimans]{cv2}
J.~Ho, C.~Saharia, W.~Chan, D.~J. Fleet, M.~Norouzi, and T.~Salimans.
\newblock Cascaded diffusion models for high fidelity image generation.
\newblock \emph{J. Mach. Learn. Res.}, 23:\penalty0 47:1--47:33, 2021.

\bibitem[H\"utter and \"Ottinger(1998)]{HutterOttinger98}
M.~H\"utter and H.~C. \"Ottinger.
\newblock Fluctuation-dissipation theorem, kinetic stochastic integrals and
  efficient simulations.
\newblock \emph{J. Chem. Soc. Faraday Trans.}, 94:\penalty0 1403--1405, 1998.

\bibitem[Hyv{\"a}rinen and Dayan(2005)]{hyvarinen2005estimation}
A.~Hyv{\"a}rinen and P.~Dayan.
\newblock Estimation of non-normalized statistical models by score matching.
\newblock \emph{Journal of Machine Learning Research}, 6, 2005.

\bibitem[Kelly and Melbourne(2017)]{KellyMelbourne17}
D.~Kelly and I.~Melbourne.
\newblock Deterministic homogenization for fast--slow systems with chaotic
  noise.
\newblock \emph{Journal of Functional Analysis}, 272:\penalty0 4063--4102,
  2017.

\bibitem[Knight(2008)]{sinkhorn_knopp}
P.~A. Knight.
\newblock The {S}inkhorn-{K}nopp algorithm: Convergence and applications.
\newblock \emph{SIAM J. Matrix Anal. Appl.}, 30:\penalty0 261--275, 2008.

\bibitem[Laumont et~al.(2022)Laumont, Bortoli, Almansa, Delon, Durmus, and
  Pereyra]{plugandplay}
R.~Laumont, V.~D. Bortoli, A.~Almansa, J.~Delon, A.~Durmus, and M.~Pereyra.
\newblock Bayesian imaging using plug \& play priors: {W}hen {L}angevin meets
  {T}weedie.
\newblock \emph{SIAM Journal on Imaging Sciences}, 15\penalty0 (2):\penalty0
  701--737, 2022.

\bibitem[Leli{\`e}vre et~al.(2024)Leli{\`e}vre, Pavliotis, Robin, Santet, and
  Stoltz]{LelievreEtAl24}
T.~Leli{\`e}vre, G.~A. Pavliotis, G.~Robin, R.~Santet, and G.~Stoltz.
\newblock {Optimizing the diffusion coefficient of overdamped Langevin
  dynamics}.
\newblock Technical report, arXiv:2404.12087, 2024.

\bibitem[Li et~al.(2015)Li, Chen, Carlson, and Carin]{Li2015PreconditionedSG}
C.~Li, C.~Chen, D.~E. Carlson, and L.~Carin.
\newblock Preconditioned stochastic gradient {L}angevin dynamics for deep
  neural networks.
\newblock In \emph{AAAI Conference on Artificial Intelligence}, 2015.

\bibitem[Li et~al.(2022)Li, Thickstun, Gulrajani, Liang, and Hashimoto]{nlp2}
X.~L. Li, J.~Thickstun, I.~Gulrajani, P.~Liang, and T.~B. Hashimoto.
\newblock Diffusion-{LM} improves controllable text generation.
\newblock Technical report, arXiv:2205.14217, 2022.

\bibitem[Lorenz(1963)]{Lorenz63}
E.~N. Lorenz.
\newblock Deterministic nonperiodic flow.
\newblock \emph{Journal of the Atmospheric Sciences}, 20\penalty0 (2):\penalty0
  130--141, 1963.

\bibitem[Maoutsa et~al.(2020)Maoutsa, Reich, and Opper]{MRO20}
D.~Maoutsa, S.~Reich, and M.~Opper.
\newblock Interacting particle solutions of {F}okker--{P}lanck equations
  through gradient-log-density estimation.
\newblock \emph{Entropy}, 22\penalty0 (8), 2020.

\bibitem[Marshall and Coifman(2019)]{MarshallCoifman}
N.~F. Marshall and R.~R. Coifman.
\newblock Manifold learning with bi-stochastic kernels.
\newblock \emph{IMA J. Appl. Maths.}, 84:\penalty0 455--482, 2019.

\bibitem[MATLAB(2022)]{MATLAB2022b}
MATLAB.
\newblock \emph{version 9.13.0.2049777 (R2022b)}.
\newblock The MathWorks Inc., Natick, Massachusetts, 2022.

\bibitem[Mattingly et~al.(2002)Mattingly, Stuart, and Higham]{MSH02}
J.~Mattingly, A.~Stuart, and D.~Higham.
\newblock Ergodicity for {SDE}s and approximations: locally {L}ipschitz vector
  fields and degenerate noise.
\newblock \emph{Stochastic Processes and their Applications}, 101:\penalty0
  185--232, 2002.

\bibitem[Melbourne and Stuart(2011)]{MelbourneStuart11}
I.~Melbourne and A.~Stuart.
\newblock A note on diffusion limits of chaotic skew-product flows.
\newblock \emph{Nonlinearity}, 24:\penalty0 1361--1367, 2011.

\bibitem[Meyn and Tweedy(2009)]{MeynTweedy}
S.~Meyn and R.~T. Tweedy.
\newblock \emph{Markov Chains and Stochastic Stability}.
\newblock Cambridge University Press, Cambridge, 2nd edition, 2009.

\bibitem[Milanfar and Delbracio(2024)]{MD24}
P.~Milanfar and M.~Delbracio.
\newblock Denoising: {A} powerful building-block for imaging, inverse problems,
  and machine learning.
\newblock \emph{arXiv prepint arXiv:2409.06219}, 2024.

\bibitem[Pavliotis(2016)]{Pavliotis2016}
G.~A. Pavliotis.
\newblock \emph{Stochastic Processes and Applications}.
\newblock Springer Verlag, New York, 2016.

\bibitem[Peluchetti(2024)]{Peluchetti24}
S.~Peluchetti.
\newblock Diffusion bridge mixture transports, {S}chr\"{o}dinger bridge
  problems and generative modeling.
\newblock \emph{J. Mach. Learn. Res.}, 24\penalty0 (1), 2024.

\bibitem[Peyr\'e and Cuturi(2019)]{PeyreCuturi}
G.~Peyr\'e and M.~Cuturi.
\newblock Computational optimal transport.
\newblock \emph{Foundations and Trends in Machine Learning}, 5--6:\penalty0
  355--607, 2019.

\bibitem[Pooladian and Niles-Weed(2022)]{PooladianNilesWeed2022}
A.-A. Pooladian and J.~Niles-Weed.
\newblock Entropic estimation of optimal transport maps.
\newblock Technical report, arXiv:2109.12004, 2022.

\bibitem[Reich and Cotter(2015)]{reichcotter15}
S.~Reich and C.~Cotter.
\newblock \emph{Probabilistic forecasting and {B}ayesian data assimilation}.
\newblock Cambridge University Press, Cambridge, 2015.

\bibitem[Rey-Bellet and Spiliopoulos(2016)]{ReyBellet2016ImprovingTC}
L.~Rey-Bellet and K.~V. Spiliopoulos.
\newblock Improving the convergence of reversible samplers.
\newblock \emph{Journal of Statistical Physics}, 164:\penalty0 472--494, 2016.

\bibitem[Salgado-Ugarte and Pérez-Hernández(2003)]{vb_kde_Salgado-Ugarte}
I.~H. Salgado-Ugarte and M.~A. Pérez-Hernández.
\newblock Exploring the use of variable bandwidth kernel density estimators.
\newblock \emph{The Stata Journal}, 3:\penalty0 133--147, 2003.

\bibitem[Sander et~al.(2022)Sander, Ablin, Blondel, and Peyre]{SABP22}
M.~E. Sander, P.~Ablin, M.~Blondel, and G.~Peyre.
\newblock Sinkformers: {T}ransformers with doubly stochastic attention.
\newblock \emph{PMLR}, 151:\penalty0 1--16, 2022.

\bibitem[Schlegl et~al.(2017)Schlegl, Seeb{\"o}ck, Waldstein, Schmidt-Erfurth,
  and Langs]{anomaly2}
T.~Schlegl, P.~Seeb{\"o}ck, S.~M. Waldstein, U.~Schmidt-Erfurth, and G.~Langs.
\newblock Unsupervised anomaly detection with generative adversarial networks
  to guide marker discovery.
\newblock In \emph{International conference on information processing in
  medical imaging}, pages 146--157. Springer, 2017.

\bibitem[Seguy et~al.(2018)Seguy, Damodaran, Flamary, Courty, Rolet, and
  Blondel]{SeguyEtAl2018}
V.~Seguy, B.~B. Damodaran, R.~Flamary, N.~Courty, A.~Rolet, and M.~Blondel.
\newblock Large-scale optimal transport and mapping estimation.
\newblock In \emph{Proceedings of the International Conference in Learning
  Representations}, 2018.

\bibitem[Shi et~al.(2023)Shi, Bortoli, Campbell, and Doucet]{ShiEtAl23}
Y.~Shi, V.~D. Bortoli, A.~Campbell, and A.~Doucet.
\newblock Diffusion {S}chr\"odinger bridge matching.
\newblock In \emph{Thirty-seventh Conference on Neural Information Processing
  Systems}, 2023.
\newblock URL \url{https://openreview.net/forum?id=qy07OHsJT5}.

\bibitem[Song and Ermon(2019)]{diffusion2}
Y.~Song and S.~Ermon.
\newblock Generative modeling by estimating gradients of the data distribution.
\newblock In H.~Wallach, H.~Larochelle, A.~Beygelzimer, F.~d\'Alch\'{e} Buc,
  E.~Fox, and R.~Garnett, editors, \emph{Advances in Neural Information
  Processing Systems}, volume~32. Curran Associates, Inc., 2019.

\bibitem[Song et~al.(2020)Song, Garg, Shi, and Ermon]{song2020sliced}
Y.~Song, S.~Garg, J.~Shi, and S.~Ermon.
\newblock Sliced score matching: A scalable approach to density and score
  estimation.
\newblock In \emph{Uncertainty in Artificial Intelligence}, pages 574--584.
  PMLR, 2020.

\bibitem[Terrell and Scott(1992)]{terrell1992variable}
G.~R. Terrell and D.~W. Scott.
\newblock Variable kernel density estimation.
\newblock \emph{The Annals of Statistics}, pages 1236--1265, 1992.

\bibitem[Vargas et~al.(2021)Vargas, Thodoroff, Lamacraft, and
  Lawrence]{VargasEtAl21}
F.~Vargas, P.~Thodoroff, A.~Lamacraft, and N.~Lawrence.
\newblock Solving {S}chr\"{o}dinger bridges via maximum likelihood.
\newblock \emph{Entropy}, 23\penalty0 (9), 2021.
\newblock \doi{10.3390/e23091134}.

\bibitem[Vaswani et~al.(2017)Vaswani, Shazeer, Parmar, Uszkoreit, Jones, Gomez,
  Kaiser, and Polosukhin]{Transformer}
A.~Vaswani, N.~Shazeer, N.~Parmar, J.~Uszkoreit, L.~Jones, A.~N. Gomez, L.~u.
  Kaiser, and I.~Polosukhin.
\newblock Attention is all you need.
\newblock In I.~Guyon, U.~V. Luxburg, S.~Bengio, H.~Wallach, R.~Fergus,
  S.~Vishwanathan, and R.~Garnett, editors, \emph{Advances in Neural
  Information Processing Systems}, volume~30. Curran Associates, Inc., 2017.
\newblock URL
  \url{https://proceedings.neurips.cc/paper_files/paper/2017/file/3f5ee243547dee91fbd053c1c4a845aa-Paper.pdf}.

\bibitem[Whiteley et~al.(2024)Whiteley, Gray, and
  Rubin-Delanchy]{WhiteleyEtAl24}
N.~Whiteley, A.~Gray, and P.~Rubin-Delanchy.
\newblock Statistical exploration of the manifold hypothesis, 2024.
\newblock URL \url{https://arxiv.org/abs/2208.11665}.

\bibitem[Wormell and Reich(2021)]{WR20}
C.~Wormell and S.~Reich.
\newblock Spectral convergence of diffusion maps: {I}mproved error bounds and
  an alternative normalisation.
\newblock \emph{SIAM J. Numer. Anal.}, 59:\penalty0 1687--1734, 2021.
\newblock \doi{10.1137/20M1344093}.

\bibitem[Yogatama et~al.(2017)Yogatama, Dyer, Ling, and Blunsom]{nlp1}
D.~Yogatama, C.~Dyer, W.~Ling, and P.~Blunsom.
\newblock Generative and discriminative text classification with recurrent
  neural networks.
\newblock Technical report, arXiv:1703.01898, 2017.

\end{thebibliography}

\end{document}